\documentclass[11pt,a4paper]{article}

\usepackage[utf8]{inputenc}  
\usepackage[T1]{fontenc}     
\usepackage[english]{babel}  
\usepackage{amsmath, amssymb, amsthm}  
\usepackage{mathtools}
\usepackage{subcaption}
\usepackage{geometry}        
\usepackage{enumitem}
\usepackage{authblk}
\usepackage{bbold}
\usepackage{graphicx}        
\usepackage{hyperref}        
\usepackage{xcolor}          
\setlist[itemize]{topsep=-5pt, parsep=0pt, itemsep=2pt, leftmargin = 45pt} 

\usepackage{dsfont}

\setlist{nolistsep}
\usepackage{hyperref}
\newtheorem{theorem}{Theorem}
\newtheorem{lemma}{Lemma}
\newtheorem{corollary}{Corollary}
\newtheorem{proposition}{Proposition}
\newtheorem{remark}{Remark}
\newtheorem{assumption}{Assumption}

\newcommand{\R}{\mathbb{R}}  
\newcommand{\N}{\mathbb{N}}  
\newcommand{\E}{\mathbb{E}}  
\newcommand{\hth}{\hat{\theta}} 
\newcommand{\Tr}{\text{Tr}} 
\newcommand{\diff}{\textrm{d}} 

\newcommand{\DS}{\displaystyle}
\newcommand{\nt}{\lfloor nt \rfloor}
\newcommand{\neps}{\lfloor n\epsilon \rfloor}

\title{Functional Central Limit Theorem for Stochastic Gradient Descent}
\author[1]{Kessang Flamand}
\author[1]{Victor-Emmanuel Brunel}

\affil[1]{CREST-ENSAE}

\affil[ ]{\texttt{kessang.flamand@ensae.fr, victor-emmanuel.brunel@ensae.fr}}

\date{}

\begin{document}

\maketitle

\begin{abstract}
  We study the asymptotic shape of the trajectory of the stochastic gradient descent algorithm applied to a convex objective function. Under mild regularity assumptions, we prove a functional central limit theorem for the properly rescaled trajectory. Our result characterizes the long-term fluctuations of the algorithm around the minimizer by providing a diffusion limit for the trajectory. In contrast with classical central limit theorems for the last iterate or Polyak–Ruppert averages, this functional result captures the temporal structure of the fluctuations and applies to non-smooth settings such as robust location estimation, including the geometric median.
\end{abstract}

\bigskip
\noindent\textbf{Keywords:} Stochastic gradient descent, online convex optimization, stochastic approximation, functional central limit theorem, asymptotic fluctuations%

\section{Introduction}

\subsection{Framework}

In this work, we are interested in the asymptotic properties of the whole trajectory of stochastic algorithms for the minimization of convex objectives. Namely, let $\Phi:\R^d\to\R$ ($d\geq 1$) be a convex function. Let $G:\R^d\to\R^d$ be a measurable function such that $G(\theta)$ is a sub-gradient of $\Phi$ at $\theta$, for all $\theta\in\R^d$. If 
$\Phi$ is differentiable, then $G$ is simply given by $\nabla\Phi$. We consider algorithms based on such iterations as:
\begin{equation} \label{eqn:SGD_general}
    \begin{cases}
        \theta_0\in\R^d; \\
        \theta_{n}=\theta_{n-1}-t_n G_n, \quad \forall n\geq 1,
    \end{cases}
\end{equation}
where $\theta_0\in\R^d$ is the algorithm initialization, which we will always consider fixed, non-random for simplicity; $(t_n)_{n\geq 1}$ is a sequence of (deterministic) step-sizes; for all $n\geq 1$, $G_n$ is a noisy version of $G(\theta_{n-1})$ that can be written as $G_n=G(\theta_{n-1})+\varepsilon_n$ for some random vector $\varepsilon_n$  satisfying $\E[\varepsilon_n|\theta_{n-1}]=0$ almost surely. 

More precisely, we focus on the case when $\Phi$ can be expressed as the expectation of a convex loss:
\begin{equation}
    \Phi(\theta)=\E[\phi(X,\theta)], \quad \forall\theta\in\R^d,
\end{equation}
where $X$ is a random variable taking values in some abstract measurable space $(E,\mathcal E)$ and $\phi:E\times\R^d\to\R$ is a given map that is measurable in its first argument, convex in its second, and satisfies that $\phi(X,\theta)$ is integrable for all $\theta\in\R^d$. In that case, \cite[Theorem 2]{article:Brunel-2025} shows that there exists a map $g:E\times\R^d\to\R^d$ that is measurable in its first argument and satisfies that with probability $1$, $g(X,\theta)$ is a subgradient of $\phi(X,\cdot)$ at $\theta$, for all $\theta\in\R^d$. By setting $G(\theta)=\E[g(X,\theta)]$ for all $\theta\in\R^d$, \cite[Theorem 3]{article:Brunel-2025} ensures that the function $G$ is well defined, it is measurable and $G(\theta)$ is a subgradient of $\Phi$ at $\theta$, for all $\theta\in\Theta$. Thus, given i.i.d random variables $X_1,X_2,\ldots$ with the same distribution as $X$, we can set $G_n=g(X_n,\theta_{n-1})$ for all $n\geq 1$, so $\varepsilon_n:=G_n-G(\theta_{n-1})$ satisfies $\E[\varepsilon_n|\theta_{n-1}]=0$. 

In an offline context, the estimation of a minimizer $\theta^*$ of $\Phi$ based on i.i.d samples $X_1,\ldots,X_n$ typically resorts to $M$-estimation, or empirical risk minimization, where one seeks for a minimizer of the empirical loss $n^{-1}\sum_{i=1}^n\phi(X_i,\theta), \theta\in \R^d$ \cite{book:Hubert,haberman1989concavity,article:Niemiro-1992,article:Koltchinskii-1997,article:Brunel-2025}. Here, we consider the online problem, where the algorithm takes one data $X_n$ at a time to update its output $\theta_n$.

\subsection{Contributions}

Under minimal convexity assumtions on $\Phi$, we obtain a functional central limit theorem (FCLT) for the trajectories of the stochastic gradient descent (SGD) iterates \eqref{eqn:SGD_general}. Over classical central limit theorems for last iterates (or Polyak-Ruppert averages), our result allows to recover information on the fluctuations of the trajectory in long-term regimes. Notably, compared to classical asymptotic results on SGD, we do not require global strong convexity on $\Phi$. We show that local strong convexity on an arbitrarily small neighborhood of the minimizer suffices, encompassing situations such as geometric median estimation or, more generally, robust location estimation. 

\subsection{Related work}

Stochastic gradient descent (SGD), or Robbins-Monro procedure, has been widely studied since its first introduction in \cite{article:RobbinsMonro-1951}. The first central limit theorem (CLT) is from \cite{article:Chung-1954} and gives an  $n^{-1/2}$ rate of convergence when the step-size is of the form $cn^{-1}$ for some specific choice of $c>0$. For larger step-sizes $t_n = cn^{-\alpha}$ with $1/2<\alpha<1$, a CLT is obtained with rate $n^{-\alpha/2}$. Later, \cite{article:Sacks-1958} then \cite{article:Fabian-1968} obtained a CLT in the case $t_n = cn^{-1}$ using other methods that allow some assumptions to be relaxed
Later, \cite{article:Sacks-1958} and \cite{article:Fabian-1968} established a central limit theorem for step-sizes of the form $t_n = c/n$ that allowed for weaker assumptions on the moments of the noise and extended the results to the multidimensional setting. \cite{article:Fabian-1968} further highlighted the crucial role of the step-size constant $c$: if $c$ is too small, convergence can be arbitrarily slow, while for larger values ensuring a CLT at rate $n^{1/2}$, the asymptotic variance increases with $c$. The optimal value for $c$ depends on the Hessian of the objective function at the minimizer, so its calibration requires prior knowledge of the local curvature. Ever since, other CLT-type results have been shown, such as in the case of multiple targets in \cite{article:Pelletier-1998} or, very recently, infinite variance of the noise in the evaluation of gradients in \cite{article:Blanchet-2025}. In these two cases, the asymptotic distribution is not normal but is given as the stationary law of a stochastic process. Later, \cite{article:Polyak-1992} showed that the average of the first $n$ SGD iterates with large step-size converges at rate $n^{-1/2}$ with optimal asymptotic variance and step-size that does not require prior information. This version of SGD has also been widely studied, for example in very recent works in the specific case of geometric medians \cite{article:Cardotal-2025} -- where the objective is neither smooth nor strongly convex -- or even in non-convex cases \cite{article:Dereich-2023}. 

Results of functional type, that is, describing the distribution of the entire trajectory of SGD, are much less common in the literature. Notably, \cite{article:Berger-1997} established an almost sure invariance principle in the special case of linear filtering and regression, thereby providing the asymptotic temporal correlations of the iterates. In this specific linear setting, the problem can be reduced to the study of a system of linear equations whose coefficients are observed with noise. \cite{article:Arnaudonall_2012} established a functional central limit theorem in the context of barycenter estimation on a Riemannian manifold, using a Riemannian version of stochastic gradient descent. Their result is based on a convergence theorem for Markov chains to diffusion processes (Theorem 11.2.3 in \cite{book:StroockVaradhan}), which is based on the convergence of the Markov transition operators to the generator of the limiting diffusion -- a result that we also employ in our analysis. The authors assume bounded noise in the gradient evaluations and strong convexity of the objective function. Notably, Donsker's theorem \cite[Theorem 8.2]{book:Billingsley1968} provides a similar result for the case of mean estimation in Euclidean spaces.

Non-asymptotic properties of the SGD have also raised a lot of interest. First, \cite{book:NemirovskiYudin-1983,article:Nemirovskial-2009} give bounds on the $L^2$ distance between the $n$-th iteration of the SGD and the target minimizer for any finite $n$. Different rates of convergence have then been obtained in various settings. In particular, some bounds depending on the choice of the step-size were obtained in \cite{article:MoulinesBach-2011}, and later extended to the non-strongly convex case in \cite{article:BachMoulines-2013}. These bounds are typically written as the sum of a term that depends on the initial condition $\theta_0$ and another term that does not. This decomposition highlights the two regimes that characterize the convergence of stochastic gradient descent: a first regime in which the iterates move on average in the direction of the minimizer and approach it rapidly, followed by a second regime in which the iterates are close to the minimizer and fluctuate around it without any global preferred direction and with a decreasing variance. In particular, the trajectory in the first regime strongly depends on the initial condition, whereas this dependence is lost in the second regime. \cite{article:MoulinesBach-2011} showed that in the strongly convex case with $t_n = c/n$ for some $c>0$, the initial conditions are forgotten at rate $O(1/n^{\alpha/2})$ with $\alpha>1$ depending on $c$. In the case of bounded gradients, it also gives us that the fluctuations around the minimizer are of order $O(1/\sqrt{n})$, which is consistent with the central limit theorem of \cite{article:Chung-1954}, \cite{article:Sacks-1958}, \cite{article:Fabian-1968} and with our own result. In particular, our functional central limit theorem aims to quantify the fluctuations in the second regime.

For a version of SGD with constant step-sizes, iterates remain too noisy and do not converge to the minimizer $\theta^*$. Instead, they converge in distribution to the unique stationary distribution (\cite{article:Dieuleveut-al-2020}), which then collapses to $\theta^*$ as the constant step-size is chosen arbitrarily close to zero. It has been shown (\cite{article:Mandt-al-2015}, \cite{article:Li-al-2019}) that stochastic gradient algorithms with constant step-size can be approximated by a stochastic differential equation of the form
$\mathrm{d}X_t = -\nabla\Phi(X_t)\mathrm{d}t + \sqrt{\eta}\Sigma(X_t)\mathrm{d}B_t$, where $\eta$ is the constant step-size and $\Sigma(X_t)$ is a covariance matrix. This result quantifies the fluctuations of the SGD trajectory around that of the deterministic gradient descent, showing that the diffusive term vanishes as the step-size (and hence the noise magnitude) goes to zero. Consequently, this result is fundamentally different from our FCLT, which describes the fluctuations of SGD with a decreasing step-size around the minimizer $\theta^*$, in a regime where the iterates are already close to the optimum and the gradient of $\Phi$ is well approximated by its first-order Taylor expansion.

\section{Main results}

Before stating our results, we first give our main working assumptions.

\begin{assumption} \label{assump:Phi-a}
    The function $\Phi$ has a unique minimizer $\theta^*$. 
\end{assumption}

\begin{assumption} \label{assump:Phi-b}
    The function $\Phi$ has a unique minimizer $\theta^*$ and it is twice continuously differentiable in a neighborhood of $\theta^*$ with positive definite Hessian $\nabla^2\Phi(\theta^*)$ at $\theta^*$.
\end{assumption}

\begin{assumption}\label{assump:quadratic_growth}
    There exists $L>0$ such that $\|G(\theta)\|\leq L\|\theta-\theta^*\|$, for all $\theta\in\R^d$.
\end{assumption}

This assumption states that $\Phi$ grows at most quadratically from $\theta^*$. Note that we do not require $\Phi$ to be differentiable and to have Lipschitz gradients. In the absence of further regularity assumption on $\Phi$, Assumption~\ref{assump:quadratic_growth} is necessary even to obtain convergence of SGD, since convergence could fail otherwise even in the noiseless case. 

\begin{assumption}\label{assump:noise1}
    There exists $\sigma^2>0$ such that $\E[\|g(X_1,\theta)-G(\theta)\|^2]\leq\sigma^2$ for all $\theta\in\R^d$. 
\end{assumption}

This assumption states that for any query of a subgradient of $\Phi$, the error always has a second moment that is uniformly bounded, irrespective of the point $\theta\in\R^d$. In particular, under this assumption, independence of the $X_i$'s yields that $\E[\|g(X_n,\theta_{n-1})-G(\theta_{n-1})\|^2|\mathcal F_{n-1}]\leq\sigma^2$ almost surely, for all $n\geq 1$, which is a common assumption. 

The next assumption is more stringent as it requires that, at least in a neighborhood of $\theta^*$, that error is uniformly bounded in an $L^2$ sense.

\begin{assumption}\label{assump:noise2}
    There exists $\eta>0$ such that $\E[\sup_{\theta\in B(\theta^*,\eta)}\|g(X_1,\theta)-G(\theta)\|^2]<\infty$.
\end{assumption}

Below, we explain how Assumption~\ref{assump:noise2} can be replaced by a set of two assumptions which, in some case, may be less restrictive (see Assumptions~\ref{assump:noise21} and \ref{assump:noise22}).

Our first theorem is the consistency of the sequence $(\theta_n)_{n\geq 0}$ defined in \eqref{eqn:SGD_general}, provided the step-sizes are chosen appropriately. For all $n\geq 1$, we denote by $\mathcal F_n$ the $\sigma$-algebra spanned by $X_1,\ldots,X_n$ and by $\mathcal F_0$ the trivial $\sigma$-algebra (we are implicitly assuming that all $X_n$'s are defined on some probability space $(\Omega,\mathcal F,P)$ -- the $\sigma$-algebras $\mathcal F_0,\mathcal F_1,\ldots$ are included in $\mathcal F$). 

\begin{theorem} \label{thm:consistency}
    Let the sequence of step-sizes $(t_n)_{n\geq 1}$ satisfy $\sum_{n\geq 1}t_n=\infty$ and $\sum_{n\geq 1}t_n^2<\infty$. Let Assumptions~\ref{assump:Phi-a}, \ref{assump:quadratic_growth} and \ref{assump:noise1} hold. Then, $\theta_n\xrightarrow[n\to\infty]{} \theta^*$ almost surely.
\end{theorem}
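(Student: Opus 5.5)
The plan is to use the classical Robbins--Siegmund almost-supermartingale argument applied to $V_n=\|\theta_n-\theta^*\|^2$. Expanding the recursion \eqref{eqn:SGD_general} with $G_n=G(\theta_{n-1})+\varepsilon_n$ gives
\[
V_n = V_{n-1} - 2t_n\langle G(\theta_{n-1})+\varepsilon_n,\theta_{n-1}-\theta^*\rangle + t_n^2\|G(\theta_{n-1})+\varepsilon_n\|^2 .
\]
Conditioning on $\mathcal F_{n-1}$ kills the cross terms involving $\varepsilon_n$, since $\theta_{n-1}$ is $\mathcal F_{n-1}$-measurable and $X_n$ is independent of $\mathcal F_{n-1}$. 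Assumption~\ref{assump:quadratic_growth} bounds $\|G(\theta_{n-1})\|^2\le L^2V_{n-1}$, and Assumption~\ref{assump:noise1} bounds $\E[\|\varepsilon_n\|^2|\mathcal F_{n-1}]\le\sigma^2$. Hence
\[
\E[V_n|\mathcal F_{n-1}]\le (1+L^2t_n^2)V_{n-1} - 2t_n\langle G(\theta_{n-1}),\theta_{n-1}-\theta^*\rangle + \sigma^2t_n^2 .
\]
By convexity, $\langle G(\theta),\theta-\theta^*\rangle\ge\Phi(\theta)-\Phi(\theta^*)\ge0$. Since $\sum t_n^2<\infty$, the Robbins--Siegmund theorem yields two conclusions almost surely: $V_n$ converges to some finite random limit $V_\infty$, and $\sum_n t_n(\Phi(\theta_{n-1})-\Phi(\theta^*))<\infty$. (One should also check integrability of $V_n$; this follows inductively from the same bound, because $\theta_0$ is deterministic.)

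Next we show that $V_\infty=0$. Since $\sum t_n=\infty$, the summability gives $\liminf_n \Phi(\theta_n)=\Phi(\theta^*)$ almost surely. On the event where $V_n\to V_\infty$, the iterates are bounded, so some subsequence $\theta_{n_k}$ converges to a point $\bar\theta$ with $\Phi(\theta_{n_k})\to\Phi(\theta^*)$. A finite convex function on $\R^d$ is continuous, so $\Phi(\bar\theta)=\Phi(\theta^*)$, and uniqueness of the minimizer (Assumption~\ref{assump:Phi-a}) forces $\bar\theta=\theta^*$. Thus $\|\theta_{n_k}-\theta^*\|^2\to0$, hence $V_\infty=0$, and so $\theta_n\to\theta^*$ almost surely.

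The main subtlety is the step from $\liminf$ to identifying the limit. Without strong convexity there is no inequality of the form $\Phi(\theta)-\Phi(\theta^*)\ge cV$, so one cannot conclude directly. That is why the argument goes through the existence of the limit of $V_n$ together with a compactness/continuity argument along a subsequence, rather than a rate. An alternative that avoids subsequences is also available: for each $\delta>0$, convexity and uniqueness give $\inf_{\|\theta-\theta^*\|\ge\delta}(\Phi(\theta)-\Phi(\theta^*))>0$, because by convexity the infimum is attained on the compact sphere of radius $\delta$. This contradicts the summability on the event $\{V_\infty>\delta^2\}$. I would likely use this second version, since it is cleaner. A minor technical point is measurability of $G(\theta_{n-1})$ and the validity of the conditional expectation computation; both follow from the measurability of $g$ and $G$ recalled in the introduction, together with the independence of $X_n$ from $\mathcal F_{n-1}$.
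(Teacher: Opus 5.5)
Your proposal is correct and follows essentially the same route as the paper: Robbins--Siegmund applied to $\|\theta_n-\theta^*\|^2$, then the positive-infimum argument that the paper isolates as Lemma~\ref{lemma:convex-RS} (your preferred second version). Your details are slightly tighter than the paper's. The coefficient should indeed be $1+L^2t_n^2$ (the paper writes $1+Lt_n^2$). Your convexity argument reducing $\inf_{\|\theta-\theta^*\|\geq\delta}(\Phi(\theta)-\Phi(\theta^*))$ to the compact sphere also justifies the positivity that the paper's lemma leaves implicit, since continuity and uniqueness alone would not suffice on an unbounded set.
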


\begin{proof}
    For all $n\geq 1$, denote by $G_n=g(X_n,\theta_{n-1})$ and by $\varepsilon_n=G_n-G(\theta_n)$. Assumption~\ref{assump:noise1} yields that 
    \begin{equation} \label{eqn:bound_noise}
        \E[\|\varepsilon_n\|^2|\mathcal F_{n-1}]\leq \sigma^2
    \end{equation}
    for all $n\geq 1$. Therefore, 
    \begin{align*}
        \|\theta_n-\theta^*\|^2 & = \|\theta_{n-1}-t_nG_n-\theta^*\|^2 \\
        & = \|\theta_{n-1}-\theta^*\|^2-2t_n(\theta_{n-1}-\theta^*)^\top G_n+t_n^2\|G_n\|^2 \\
        & = \|\theta_{n-1}-\theta^*\|^2-2t_n(\theta_{n-1}-\theta^*)^\top G(\theta_{n-1})-2t_n(\theta_{n-1}-\theta^*)^\top\varepsilon_n \\
        & \quad \quad +t_n^2\|G(\theta_{n-1})\|^2+2t_n^2G(\theta_{n-1})^\top\varepsilon_n+t_n^2\|\varepsilon_n\|^2 \\
        & \leq (1+Lt_n^2)\|\theta_{n-1}-\theta^*\|^2-2t_n(\Phi(\theta_{n-1})-\Phi(\theta^*))-2t_n(\theta_{n-1}-\theta^*)^\top\varepsilon_n \\
        & \quad \quad +2t_n^2G(\theta_{n-1})^\top\varepsilon_n+t_n^2\|\varepsilon_n\|^2
    \end{align*}
    where we used the convexity of $\Phi$ and Assumption~\ref{assump:quadratic_growth} in the last inequality. Taking the conditional expectation given $\mathcal F_{n-1}$ on both sides yields
    \begin{equation} \label{eq:prop1_1}
        \E[\|\theta_n-\theta^*\|^2|\mathcal F_{n-1}] \leq (1+Lt_n^2)\|\theta_{n-1}-\theta^*\|^2-2t_n(\Phi(\theta_{n-1})-\Phi(\theta^*))+t_n^2\sigma^2
    \end{equation}
    thanks to \eqref{eqn:bound_noise}. Now, by Robbins-Siegmund theorem \cite{article:Robbins-1971}, the sequence $(\|\theta_n-\theta^*\|)_{n\geq 0}$ must converge almost surely to a non-negative random variable $Z$, and the sum $\sum_{n\geq 1}t_n(\Phi(\theta_{n-1})-\Phi(\theta^*))$ must be finite with probability $1$. By Lemma~\ref{lemma:convex-RS}, it must hold that $Z=0$ almost surely, hence, $\theta_n\xrightarrow[n\to\infty]{}\theta^*$ almost surely.
\end{proof}

Now, if we also assume Assumption~\ref{assump:Phi-b}, we have the following result, which is essential for our main theorem. 

\begin{theorem} \label{thm:tightness}
    Let Assumptions~\ref{assump:Phi-b}, \ref{assump:quadratic_growth} and \ref{assump:noise1} hold. Then, the sequence $(\sqrt n(\theta_n-\theta^*))_{n\geq 0}$ is tight.
\end{theorem}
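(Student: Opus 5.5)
Throughout, I take the step-sizes to be $t_n=c/n$ with $2c\lambda_{\min}(\nabla^2\Phi(\theta^*))>1$. This is the standard condition under which a $\sqrt n$ rate holds; it is not written in the statement and should be made explicit. With this choice, $\sum t_n=\infty$ and $\sum t_n^2<\infty$, so Theorem~\ref{thm:consistency} gives $\theta_n\to\theta^*$ almost surely. The strategy is to localize the quadratic recursion of the proof of Theorem~\ref{thm:consistency} to a small ball around $\theta^*$. In that ball the Hessian gives strong monotonicity, and a Chung-type lemma then gives an $O(1/n)$ bound on a localized second moment. The event on which the localization fails has small probability, by almost sure convergence.

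\textbf{Step 1 (local strong monotonicity).} Let $\lambda=\lambda_{\min}(\nabla^2\Phi(\theta^*))$. Pick $\mu<\lambda$ with $2c\mu>1$. By Assumption~\ref{assump:Phi-b}, $\Phi$ is $C^2$ near $\theta^*$, so $G=\nabla\Phi$ there. A Taylor expansion of $\nabla\Phi$ around $\theta^*$, together with continuity of the Hessian, gives a radius $r>0$ such that
\[
(\theta-\theta^*)^\top G(\theta)\geq \mu\|\theta-\theta^*\|^2 \quad\text{for all } \theta\in B(\theta^*,r).
\]

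\textbf{Step 2 (localized recursion).} Fix $N\geq 1$ and let $\tau_N=\inf\{k\geq N:\theta_k\notin B(\theta^*,r)\}$. Set $e_n=\|\theta_n-\theta^*\|^2\mathbb 1_{\{\tau_N>n-1\}}$ for $n> N$. The indicator is $\mathcal F_{n-1}$-measurable and is nonincreasing in $n$. Expanding $\|\theta_n-\theta^*\|^2$ as in the proof of Theorem~\ref{thm:consistency}, but using Step 1 on the cross term instead of convexity, gives
\[
\E[e_{n}\mid\mathcal F_{n-1}]\leq (1-2\mu t_n+L^2t_n^2)\,e_{n-1}+\sigma^2t_n^2 .
\]
Here Assumption~\ref{assump:quadratic_growth} controls $t_n^2\|G(\theta_{n-1})\|^2$ and Assumption~\ref{assump:noise1} controls the noise. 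Taking expectations, $u_n=\E[e_n]$ satisfies
\[
u_n\leq (1-2c\mu/n+L^2c^2/n^2)\,u_{n-1}+\sigma^2c^2/n^2 .
\]
The initial value $u_N\le\E\|\theta_N-\theta^*\|^2$ is finite: iterating \eqref{eq:prop1_1} from the deterministic $\theta_0$, and dropping the nonpositive $\Phi$ term, shows all second moments are finite. Since $2c\mu>1$, Chung's lemma (or a direct induction on $n u_n$) yields $\sup_{n>N} n\,u_n\leq K_N<\infty$.

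\textbf{Step 3 (tightness).} Let $\Omega_N=\{\theta_k\in B(\theta^*,r)\ \forall k\geq N\}$. On $\Omega_N$ we have $e_n=\|\theta_n-\theta^*\|^2$. By Markov's inequality, for $n>N$,
\[
P(\sqrt n\|\theta_n-\theta^*\|>M)\leq P(\Omega_N^c)+K_N/M^2 .
\]
Almost sure convergence gives $P(\Omega_N^c)\to0$. Given $\epsilon$, first choose $N$ with $P(\Omega_N^c)<\epsilon/2$, then choose $M$ so that $K_N/M^2<\epsilon/2$. The finitely many indices $n\le N$ are handled by enlarging $M$, which is possible because each $\theta_n$ is almost surely finite.

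The main obstacle is Step 2. The recursion only contracts inside the ball, so the indicator must be predictable for the conditional expectation argument to go through, and it must be monotone so that $\mathbb 1_{\{\tau_N>n\}}\le \mathbb 1_{\{\tau_N>n-1\}}$. The constant condition $2c\mu>1$ is exactly what makes Chung's lemma give the rate $1/n$ rather than $n^{-2c\mu}$.
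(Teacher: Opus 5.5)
Your proof is correct and follows essentially the same route as the paper. Both localize to a ball around $\theta^*$ using an $\mathcal F_{n-1}$-measurable indicator of staying in the ball since time $N$, derive a Chung-type recursion for the localized second moment, and conclude with Markov's inequality plus $P(A_N^\complement)\to 0$ from Theorem~\ref{thm:consistency}. The differences are minor:

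you bound the cross term by strong monotonicity of $\nabla\Phi$, which gives contraction $1-2\mu t_n$ and so only requires $2c\lambda_1>1$, whereas the paper goes through $\Phi(\theta)-\Phi(\theta^*)\geq(\alpha/2)\|\theta-\theta^*\|^2$ and needs $\delta>1/\lambda_1$;

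you keep the $L^2t_n^2$ term, which the paper's displayed chain silently drops;

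you invoke Chung's lemma instead of the paper's explicit telescoping via $1-\gamma u\leq(1-u)^{\gamma/2}$.
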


We defer the proof of this theorem to the appendix. The next proposition introduces a diffusion process, which will be the limit of a rescaled version of the trajectory of the sequence $(\theta_n)_{n\geq 0}$.

\begin{proposition} \label{prop: sol EDS}
    Let $H,\Sigma\in\R^{d\times d}$ be positive semi-definite, symmetric matrices. Assume that $H$ is invertible and let $0<\mu_1\leq\ldots\leq\mu_d$ be its eigenvalues and $(e_1,\ldots,e_n)$ an associated orthonormal basis of eigenvectors. For all $t>0$ and continuously differentiable functions $f:\R^d\to\R$, let $G_tf:\R^d\to\R$ be the function defined as
    $$G_tf(y)=-t^{-1}y^\top H\nabla f(y)+\frac{1}{2}\Tr(\Sigma \nabla^2 f(y)), \quad \forall y\in\R^d.$$
    Let $(B_t)_{t\geq 0}$ be a Brownian motion adapted to a filtration $\mathcal A=(\mathcal A_t)_{t\geq 1}$. 
    There exists a unique diffusion process $Y$ $\mathcal A$-adapted (up to indistinguishability) whose generator is given by $(G_t)_{t>0}$ and that satisfies $Y_t\xrightarrow[t\downarrow 0]{}0$ in probability. Moreover, $Y_t\xrightarrow[t\downarrow 0]{}0$ almost surely and one can write, with probability $1$, that for all $t>0$, 
    $$Y_t=\int_0^t e^{\log(s/t)H}\Sigma^{1/2}\diff B_s = \sum_{i=1}^d t^{-\mu_i}\left(e_i^\top \int_0^t s^{\mu_i}\Sigma^{1/2}\diff B_s\right)e_i.$$
\end{proposition}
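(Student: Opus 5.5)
The plan is to treat the statement as a linear SDE with the singular drift $-t^{-1}HY_t$ on $(0,\infty)$:
$$\diff Y_t=-t^{-1}HY_t\,\diff t+\Sigma^{1/2}\diff B_t .$$
This SDE has generator $(G_t)_{t>0}$. Away from $0$ it has globally Lipschitz coefficients, so strong existence and pathwise uniqueness hold on every interval $[\epsilon,\infty)$ for a given initial value. The whole difficulty is therefore the behaviour near $t=0$. The approach is to solve the equation explicitly by variation of constants and then show that the condition $Y_t\to0$ in probability pins down the free constant.

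\textbf{Step 1: variation of constants.} For $0<s\le t$ let $M(t,s)=e^{\log(s/t)H}=\sum_i (s/t)^{\mu_i}e_ie_i^\top$. It satisfies $\partial_t M(t,s)=-t^{-1}HM(t,s)$ and $M(s,s)=I$. Apply Itô's formula to $t^{H}Y_t:=e^{(\log t)H}Y_t$. Since $\diff(t^H)=t^{-1}Ht^H\diff t$ and $t^H$ commutes with $H$, the drift cancels and we get $\diff(t^HY_t)=t^H\Sigma^{1/2}\diff B_t$. Hence any solution satisfies, for $0<\epsilon\le t$,
$$Y_t=M(t,\epsilon)Y_\epsilon+\int_\epsilon^t M(t,s)\Sigma^{1/2}\diff B_s .$$
Conversely, for any $\mathcal A_\epsilon$-measurable initial value this formula defines a solution on $[\epsilon,\infty)$.

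\textbf{Step 2: the candidate and its properties.} Define $Y_t=\int_0^t M(t,s)\Sigma^{1/2}\diff B_s=t^{-H}\int_0^t s^H\Sigma^{1/2}\diff B_s$. The integrand is well defined because $\|M(t,s)\|\le1$ for $s\le t$, as every $\mu_i>0$. Checking the semigroup identity $M(t,s)=M(t,\epsilon)M(\epsilon,s)$ shows that this process satisfies the relation of Step 1, so it solves the SDE. Expanding in the eigenbasis $(e_i)$ gives the stated componentwise formula.
\begin{itemize}
\item $L^2$ convergence: $\E\|Y_t\|^2=\int_0^t\Tr(M(t,s)\Sigma M(t,s)^\top)\diff s\le t\Tr\Sigma\to0$, so $Y_t\to0$ in probability.
\item Almost-sure convergence: write the $i$-th coordinate as $t^{-\mu_i}N^i_t$ with $N^i_t=\int_0^t s^{\mu_i}e_i^\top\Sigma^{1/2}\diff B_s$, a continuous martingale. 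Its quadratic variation is $\langle N^i\rangle_t\asymp t^{2\mu_i+1}$. By the Dambins–Dubins–Schwarz representation and the law of the iterated logarithm at $0$, $|N^i_t|=O\big(\sqrt{t^{2\mu_i+1}\log\log(1/t)}\big)$ almost surely. Therefore $t^{-\mu_i}|N^i_t|=O\big(\sqrt{t\log\log(1/t)}\big)\to0$. If $e_i^\top\Sigma e_i=0$, then $N^i\equiv0$ and the bound is trivial.
\end{itemize}

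\textbf{Step 3: uniqueness, the main obstacle.} Let $Y'$ be another adapted solution with $Y'_t\to0$ in probability, and set $D=Y-Y'$. By Step 1 applied to both processes, the stochastic integrals cancel and $D_t=M(t,\epsilon)D_\epsilon$ for all $0<\epsilon\le t$. Since $\|M(t,\epsilon)\|\le1$, we get $\|D_t\|\le\|D_\epsilon\|$. Because $D_\epsilon\to0$ in probability as $\epsilon\downarrow0$, this gives $P(\|D_t\|>\delta)\le P(\|D_\epsilon\|>\delta)\to0$ for every $\delta>0$. Hence $D_t=0$ almost surely for each $t$, and by continuity of paths $Y$ and $Y'$ are indistinguishable.

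The delicate point is the precise meaning of "diffusion with generator $G_t$ starting at $0$" given the singular drift at $t=0$. I would formalise it as: for each $\epsilon>0$, $Y$ restricted to $[\epsilon,\infty)$ is a strong solution of the SDE, or equivalently $f(Y_t)-\int_\epsilon^t G_sf(Y_s)\diff s$ is a local martingale for smooth $f$. This is precisely what makes the Step 1 relation available for any such process; the martingale formulation yields it in law, and the strong formulation with the given $B$ yields it pathwise. A secondary check is that the almost-sure convergence in Step 2 is uniform over the finitely many coordinates, which is automatic here since there are only $d$ of them.
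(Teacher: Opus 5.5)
Your proposal is correct and follows essentially the same route as the paper. The integrating factor $t^{H}$ (which the paper applies coordinatewise as $t^{\mu_i}Y^i_t$) gives the explicit solution started at time $\epsilon$, and the condition $Y_\epsilon\to0$ in probability removes the initial-value term as $\epsilon\downarrow0$; your contraction argument on $D=Y-Y'$ via $\|M(t,\epsilon)\|\le 1$ is a repackaging of the paper's passage to the limit. You also check that the candidate actually solves the equation and prove the almost-sure convergence $Y_t\to0$ (via Dambis--Dubins--Schwarz and the law of the iterated logarithm at $0$), both of which the paper's proof asserts but does not justify.
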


By \cite[Theorem 7.3.3]{book:Oksendal-2003}, the process $(Y_t)_{t>0}$ must satisfy the stochastic differential equation
$$
\diff Y_t = -t^{-1}HY_t\diff t + \Sigma^{1/2}\mathrm{d}B_t, \quad \forall t>0.
$$

In the sequel, we let Assumption~\ref{assump:Phi-b} hold. We denote by $0<\lambda_1\leq\ldots\leq \lambda_d$ the eigenvalues of $\nabla^2\Phi(\theta^*)$, with associated orthonormal basis of eigenvectors $(e_1,\ldots,e_n)$. We also assume that $g(X_1,\theta^*)$ has two moments and we let $\Gamma=\E[g(X_1,\theta^*)g(X_1,\theta^*)^\top]$ be its covariance matrix. Note that $\E[g(X_1,\theta^*)]=G(\theta^*)=\nabla\Phi(\theta^*)=0$, since Assumption~\ref{assump:Phi-b} implies differentiability of $\Phi$ at $\theta^*$. Moreover, Assumption~\ref{assump:Phi-b} implies differentiability of $\Phi$ at $\theta^*$, so $\phi(X_1,\cdot)$ must be almost surely differentiable at $\theta^*$, by \cite[Lemma 13]{article:Brunel-2025}. Hence, the choice of $g(X_1,\theta^*)$ in the definition of $\Gamma$ is almost surely unique. 

Fix a number $\delta>1/\lambda_1$ and consider step-sizes given by $t_n=\delta/n$, for all $n\geq 1$. 

Applying Proposition~\ref{prop: sol EDS} to $H=\delta\nabla^2\Phi(\theta^*)-I_d$ and $\Sigma=\delta^2\Gamma$ yields the existence of a unique (up to indistinguishability) diffusion process $Y$ in $\mathcal C^0((0,\infty),\R^d)$ with generator $(G_t)_{t>0}$ given by 
\begin{equation} \label{eq:generator}
    G_tf(y) = t^{-1}y^\top (I_d-\delta\nabla^2\Phi(\theta^*))\nabla f(y) + \frac{\delta^2}{2}\Tr(\Gamma\nabla^2f(y)), \quad y\in\R^d, t>0, f\in \mathcal C^2(\R^d,\R)
\end{equation}
and satisfying both
\begin{equation} \label{eqn:diffusion}
\diff Y_t=t^{-1}(I_d-\delta\nabla^2\Phi(\theta^*))Y_t\diff t+\delta\Gamma^{1/2}\diff B_t,\quad \forall t>0,
\end{equation}
where $(B_t)_{t\geq 0}$ is a standard Brownian motion, and $Y_t\xrightarrow[t\downarrow 0]{} 0$ in probability.
In the following, we let $a(t,y)=t^{-1}(I_d-\delta\nabla^2\Phi(\theta^*))y$ and $b(t,y)=\delta^2\Gamma$, for all $(t,y)\in (0,\infty)\times\R^d$, which we refer to as the \textit{drift} and \textit{diffusion} terms of $(Y_t)_{t>0}$, respectively.

Now, we define rescaled, continuous time trajectories of the sequence of iterates $(\theta_n)_{n\geq 1}$ as follows. 

For every $n\geq 1$, consider the sequence $(\tilde Y_{k}^n)_{k\geq 0}$ defined by setting 
\begin{equation}
\label{eq:def_rescaling}
    \tilde Y_k^n=\frac{k}{\sqrt n}(\theta_k-\theta^*), \quad \forall k\geq 0.
\end{equation}
Then, $(\tilde Y_k^n)_{k\geq 0}$ is an inhomogeneous, discrete time Markov chain. We introduce the continuous time process $(Y_t^n)_{t>0}$ by linear interpolation of the values of $\tilde Y_{k}^n$, that is, we set, for all $t>0$, 
$$Y_t^n=(k-nt)\tilde Y_{k-1}^n+(nt-k+1)\tilde Y_{k}^n, \quad \quad (k-1)/n<t\leq k/n.$$
By construction, $Y^n$ is a continuous map defined on $(0,\infty)$ and taking values in $\R^d$. We denote by $\mathcal C^0((0,\infty),\R^d)$ the space of all such maps, which we equip with the topology induced by uniform convergence on compact intervals of $(0,\infty)$. 

Let us state our main result, providing a functional central limit theorem for $(Y^n)_{n\geq 1}$.

\begin{theorem} \label{thm:main}
    Let Assumptions~\ref{assump:Phi-b}, \ref{assump:quadratic_growth}, \ref{assump:noise1} and \ref{assump:noise2} hold and let $Y$ be the diffusion defined in \eqref{eqn:diffusion}. Then, 
    $$Y^n\xrightarrow[n\to\infty]{} Y$$
    in distribution in the space $\mathcal C^0((0,\infty),\R^d)$ endowed with the topology induced by uniform convergence on compact intervals. 
\end{theorem}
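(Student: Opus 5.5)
We follow the route hinted at in the introduction: apply the Stroock--Varadhan theorem on convergence of Markov chains to diffusions (\cite[Theorem 11.2.3]{book:StroockVaradhan}). We use it on each compact interval $[\epsilon,T]\subset(0,\infty)$, starting the chain at time $\epsilon$, and then remove the dependence on $\epsilon$.

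\textbf{Step 1: the recursion for the rescaled chain.} Write $\tilde Y^n_k=\frac{k}{\sqrt n}(\theta_k-\theta^*)$. With $t_k=\delta/k$,
$$\tilde Y^n_k-\tilde Y^n_{k-1}=\frac{1}{k}\tilde Y^n_{k-1}-\frac{\delta}{\sqrt n}\,g(X_k,\theta_{k-1}).$$
We then compute the conditional moments given $\mathcal F_{k-1}$, at $y=\tilde Y^n_{k-1}$ and $s=k/n$.

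The conditional mean of the increment is
$$\frac{1}{k}y-\frac{\delta}{\sqrt n}G\Big(\theta^*+\frac{\sqrt n}{k-1}y\Big)\cdot\frac{k-1}{k}\ \text{-type terms}.$$
Using Assumption~\ref{assump:Phi-b}, $G(\theta)=\nabla^2\Phi(\theta^*)(\theta-\theta^*)+o(\|\theta-\theta^*\|)$ near $\theta^*$. Hence $n$ times the conditional mean equals $a(s,y)+o(1)$, uniformly for $s\in[\epsilon,T]$ and $\|y\|\le R$. This works because $\|\theta_{k-1}-\theta^*\|\le \sqrt n R/k=O(n^{-1/2})$ there.

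The conditional second moment is $\frac{\delta^2}{n}\E[g(X_1,\theta)g(X_1,\theta)^\top]+O(n^{-2}\|y\|^2)$. Continuity of $\theta\mapsto\E[g(X_1,\theta)g(X_1,\theta)^\top]$ at $\theta^*$ then gives $n$ times the second moment $\to\delta^2\Gamma=b(s,y)$. This continuity follows from dominated convergence, using Assumption~\ref{assump:noise2} as the dominating bound together with a.s. differentiability of $\phi(X_1,\cdot)$ at $\theta^*$. That differentiability forces $g(X_1,\theta)\to g(X_1,\theta^*)$ a.s. as $\theta\to\theta^*$, by continuity of subdifferentials at points of differentiability.

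Finally, the Stroock--Varadhan jump condition requires $n\,P(\|\Delta\tilde Y\|>\eta'\mid\mathcal F_{k-1})\to0$ uniformly on compacts. This reduces to the Lindeberg-type bound
$$\E\big[\sup_{B(\theta^*,\eta)}\|g-G\|^2\,\mathbb 1\{\sup\|g-G\|>\eta'\sqrt n/\delta\}\big]\to0,$$
which is again supplied by Assumption~\ref{assump:noise2}. To get uniform control without localization, we can modify the transition kernel outside $\{\|y\|\le R\}$: we truncate and use a localization argument, which is standard in \cite{book:StroockVaradhan}.

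\textbf{Step 2: convergence on $[\epsilon,T]$ and removal of $\epsilon$.} The limiting martingale problem for $(a,b)$ on $[\epsilon,\infty)$ is well posed: the coefficients are smooth and affine in $y$ for $s\ge\epsilon$. The initial laws $Y^n_\epsilon$ are tight by Theorem~\ref{thm:tightness}, since $Y^n_\epsilon\approx\sqrt{\epsilon}\cdot\sqrt{\lfloor n\epsilon\rfloor}(\theta_{\lfloor n\epsilon\rfloor}-\theta^*)$.

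We do not know the limit law of $Y^n_\epsilon$ directly, so we argue by subsequences. Take a subsequence along which $(Y^n_{\epsilon_j})_j$ converge jointly for a countable $\epsilon_j\downarrow0$. Stroock--Varadhan then gives convergence of $Y^n$ on each $[\epsilon_j,T]$ to the diffusion solving \eqref{eqn:diffusion} from the limiting initial law at $\epsilon_j$. These limits are consistent, so together they define a process $Z$ on $(0,\infty)$ with generator $(G_t)$.

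To identify $Z=Y$ we need $Z_t\to0$ in probability as $t\downarrow0$. We get this from Theorem~\ref{thm:tightness} through the uniform second-moment bound $\sup_k k\,\E\|\theta_k-\theta^*\|^2\le C$, which should come out of the proof of Theorem~\ref{thm:tightness} (in its appendix). It gives $\E\|Y^n_t\|^2\lesssim t$, and hence $\E\|Z_t\|^2\le Ct\to0$. The uniqueness part of Proposition~\ref{prop: sol EDS} then forces $Z\overset{d}{=}Y$. Every subsequence therefore has a further subsequence converging to $Y$, which proves the theorem, since the topology of $\mathcal C^0((0,\infty),\R^d)$ is determined by compacts $[\epsilon,T]$.

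\textbf{Main obstacle.} The delicate point is the behaviour near $t=0$ together with the localization. First, we need a genuine $L^2$ bound $k\,\E\|\theta_k-\theta^*\|^2\le C$, not mere tightness. This needs $\delta\lambda_1>1$ and a local-strong-convexity argument, because only Assumption~\ref{assump:Phi-b} holds globally. Second, the Taylor approximation of $G$ holds only in a ball around $\theta^*$, so we must control the event that $\theta_k$ leaves $B(\theta^*,\eta)$ for $k\ge n\epsilon$. Theorem~\ref{thm:consistency} shows this event has vanishing probability. If a clean moment bound fails, we would instead show $\lim_{t\downarrow0}\limsup_n P(\|Y^n_t\|>\eta)=0$ directly, using the one-step recursion for $\E\|\theta_k-\theta^*\|^2$ restricted to the event $\{\theta_j\in B(\theta^*,\eta),\ j\ge k_0\}$.
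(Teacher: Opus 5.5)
Your proposal follows the paper's proof essentially step for step. Step 1 is Proposition~\ref{prop:cv coeff}, and Step 2 matches the paper's argument: Stroock--Varadhan on $[\epsilon,\infty)$ with initial laws tight by Theorem~\ref{thm:tightness}, subsequence extraction as $\epsilon\downarrow 0$, and identification of the limit via $Z_t\to 0$ in probability plus the uniqueness part of Proposition~\ref{prop: sol EDS}.

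You should drop the uniform bound $\sup_k k\,\E\|\theta_k-\theta^*\|^2\le C$. The appendix proof does not give it, since it only controls $\E[\|\theta_k-\theta^*\|^2\mathds 1_{A_{N:k}}]$, and you do not need it. Tightness of $(\sqrt k(\theta_k-\theta^*))_k$ alone gives $\sup_n P(\|Y^n_t\|>\alpha)\to 0$ as $t\downarrow 0$; that is your fallback and exactly the paper's argument.
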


\begin{remark}The limiting law given by Theorem~\ref{thm:main} is centered and does not depend on the initial condition $\theta_0$. However, the first iterations of stochastic gradient descent generally depend on the initial condition and, in particular, there is no reason for them to be centered around the minimizer.  This is because the dependence on the initial condition decreases faster than the amplitude of the fluctuations (see, for instance, \cite{article:MoulinesBach-2011}). Thus, under our rescaling, the term depending on the initial condition disappears asymptotically, and the remaining fluctuations are centered and gaussian. The topology fixed on $\mathcal{C}^0((0,\infty),\R^d)$ must, however, account for the time required to forget the initial condition, and consequently cannot describe convergence on time intervals that include $0$.
\end{remark}

Figure \ref{fig:zoom-fluctuations} below illustrates the previous remark through a realization of stochastic gradient descent. The first panel shows the entire trajectory, exhibiting a first regime corresponding to a noisy gradient descent, followed by a second regime in which the iterates remain close to the minimizer. The second panel zooms in on this second regime and shows centered fluctuations of decreasing amplitude. The third panel shows the rescaled trajectory obtained via \eqref{eq:def_rescaling}. The resulting trajectory illustrates the result of Theorem \ref{thm:main}, as it exhibits a diffusion-like behavior with fluctuations of constant variance.
\bigskip

\begin{figure}[htbp]
  \centering
  \includegraphics[width=1\textwidth]{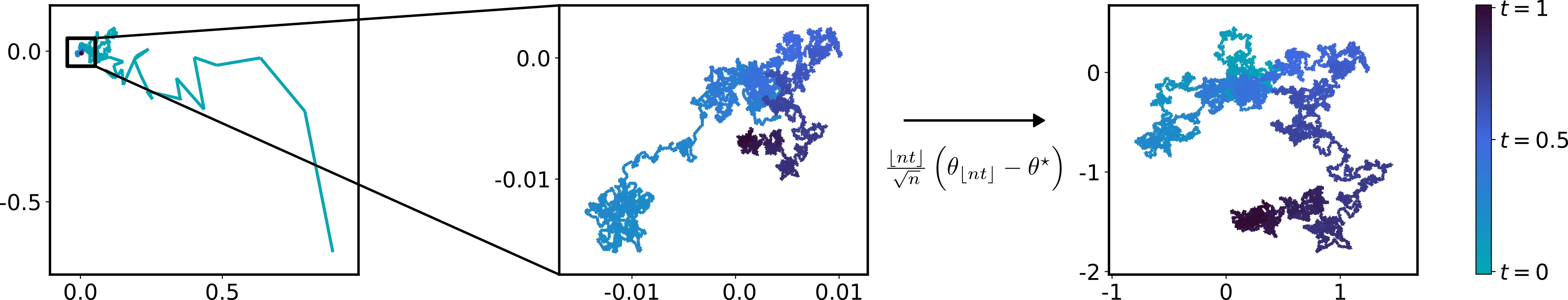}
  \caption{Stochastic gradient descent trajectory for the estimation of the median of a Laplace$(0,1)$ distribution in $\R^2$
  with independent coordinates, based on $n=50000$ samples. The first panel shows the full trajectory. The second panel zooms in on the fluctuations around the minimizer, with the first $2000$ iterations removed. The third panel shows the rescaled trajectory, illustrating the diffusion-like behavior predicted by Theorem \ref{thm:main}. Color indicates time, from light (start) to dark (end). We set the step size to $2/k$.} 
  \label{fig:zoom-fluctuations}
\end{figure}
In order to present a proof of this Theorem \ref{thm:main}, let us introduce some important quantities. 
For all $n\geq 1$, we let $(P_k^n)_{k\geq 1}$ be the sequence of transition kernels of the Markov chain $(\tilde Y_k^n)_{k\geq 0}$, that is, 
$$P_k^n(y,A)=P(\tilde Y_k^n\in A|\tilde Y_{k-1}^n=y)$$
for all $y\in\R^d$ and for all Borel sets $A$ in $\R^d$. Now, for all $t>0$ and $y\in\R^d$, we also define $a_n(t,y)$ and $b_n(t,y)$ as the (rescaled) first and second conditional moments of the increments:
$$a_n(t,y)=n\E\left[\tilde Y_{\lfloor nt\rfloor}^n-\tilde Y_{\lfloor nt\rfloor-1}^n|\tilde Y_{\lfloor nt\rfloor-1}^n=y\right]$$
and 
$$b_n(t,y)=n\E\left[(\tilde Y_{\lfloor nt\rfloor}^n-\tilde Y_{\lfloor nt\rfloor-1}^n)(\tilde Y_{\lfloor nt\rfloor}^n-\tilde Y_{\lfloor nt\rfloor-1}^n)^\top|\tilde Y_{\lfloor nt\rfloor-1}^n=y\right].$$

The next result, which is key to our main theorem, shows in particular that $a_n$ and $b_n$ converge uniformly on compact subsets of $(0,\infty)\times\R^d$ to the drift and diffusion terms $a$ and $b$ of $Y$ respectively.

\begin{proposition}
       \label{prop:cv coeff}
       Let Assumptions~\ref{assump:Phi-b}, \ref{assump:quadratic_growth} and \ref{assump:noise2} hold. Fix $R, r,\varepsilon,T$ with $0<\varepsilon<T$. Then, the following statements hold.
       \begin{itemize}
       \setlength{\itemsep}{0.6em}
           \item[(i)] $\DS \sup_{\substack{\varepsilon\leq t\leq T, \\ y\in B(\theta^*,
           R)}} nP^n_{\nt}\left(y,B(y, r)^\complement\right) \xrightarrow[n\to\infty]{} 0$;
           
           \item[(ii)] $\DS  \sup_{\substack{\varepsilon\leq t\leq T, \\ y\in B(\theta^*,
           R)}} \|a_n(t,y)-a(t,y)\| \xrightarrow[n\to\infty]{} 0$;
    
           \item[(iii)] $\DS \sup_{\substack{\varepsilon\leq t\leq T, \\ y\in B(\theta^*,
           R)}} \|b_n(t,y)-\delta^2\Gamma\| \xrightarrow[n\to\infty]{} 0$.
    \end{itemize}
\end{proposition}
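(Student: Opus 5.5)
We start from an explicit formula for one increment of the chain $(\tilde Y_k^n)$. Write $k=\nt$, $\theta=\theta^*+\sqrt n\,y/(k-1)$ (so that $\tilde Y^n_{k-1}=y$ corresponds to $\theta_{k-1}=\theta$), and $t_k=\delta/k$. Then
\begin{align*}
\tilde Y_k^n-\tilde Y_{k-1}^n &= \frac{k}{\sqrt n}(\theta_k-\theta^*)-\frac{k-1}{\sqrt n}(\theta_{k-1}-\theta^*)\\
&= \frac{1}{\sqrt n}(\theta-\theta^*)-\frac{\delta}{\sqrt n}\, g(X_k,\theta)\\
&= \frac{y}{k-1}-\frac{\delta}{\sqrt n}\big(G(\theta)+\varepsilon_k\big),
\end{align*}
with $\varepsilon_k=g(X_k,\theta)-G(\theta)$ independent of $\mathcal F_{k-1}$ given $\theta$. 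Uniformly for $t\in[\varepsilon,T]$ we have $k\asymp n$. For $\|y\|\le R$ (we read the ball $B(\theta^*,R)$ as the ball of radius $R$ around $0$ in the $y$-variable), $\|\theta-\theta^*\|\le \sqrt n R/(k-1)=O(n^{-1/2})$. Hence for large $n$, $\theta\in B(\theta^*,\eta)$, and also lies in the neighborhood where $\Phi$ is $\mathcal C^2$, uniformly in $(t,y)$. All three statements are then read off this identity.

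For (ii), the rescaled drift is $a_n(t,y)=ny/(k-1)-\delta\sqrt n\,G(\theta)$. Since $\Phi$ is $\mathcal C^2$ near $\theta^*$ with $\nabla\Phi(\theta^*)=0$, Taylor's formula gives
\[
G(\theta)=\nabla^2\Phi(\theta^*)(\theta-\theta^*)+o(\|\theta-\theta^*\|),
\]
uniformly as $\theta\to\theta^*$. Hence $\sqrt n\,G(\theta)=\frac{n}{k-1}\nabla^2\Phi(\theta^*)y+o(1)$ uniformly on the compact set. Using $n/(k-1)\to 1/t$ uniformly on $[\varepsilon,T]$, we get $a_n(t,y)\to t^{-1}(I_d-\delta\nabla^2\Phi(\theta^*))y=a(t,y)$ uniformly.

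For (iii), expand the square of the increment. The deterministic parts are $O(n^{-1})$, so after multiplication by $n$ they contribute $O(n^{-1})\to 0$. The cross terms with $\varepsilon_k$ vanish in expectation. The main term is $\delta^2\E[\varepsilon_k\varepsilon_k^\top]=\delta^2\E[(g(X_1,\theta)-G(\theta))(g(X_1,\theta)-G(\theta))^\top]$. We must show this converges to $\delta^2\Gamma$ uniformly as $\theta\to\theta^*$, and this is the main obstacle. The plan is as follows. By \cite[Lemma 13]{article:Brunel-2025}, $\phi(X_1,\cdot)$ is a.s.\ differentiable at $\theta^*$. Subgradients of a convex function converge to the gradient at a point of differentiability, so $g(X_1,\theta)\to g(X_1,\theta^*)$ a.s.\ as $\theta\to\theta^*$, along any sequence. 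Then $G(\theta)\to 0$, and Assumption~\ref{assump:noise2} supplies the square-integrable envelope $\sup_{B(\theta^*,\eta)}\|g(X_1,\theta)-G(\theta)\|^2$ needed for dominated convergence. This gives convergence along every sequence $\theta_j\to\theta^*$, hence uniform convergence over shrinking balls.

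For (i), for large $n$ the deterministic part of the increment has norm $O(n^{-1/2})<r/2$ uniformly. Therefore
\[
nP^n_k(y,B(y,r)^\complement)\le n\,P\big(\delta\|\varepsilon_k\|>r\sqrt n/2\big)\le n\,P\big(Z>cn\big),
\]
where $Z=\sup_{B(\theta^*,\eta)}\|g(X_1,\theta)-G(\theta)\|^2$ is integrable by Assumption~\ref{assump:noise2} and $c>0$ is a constant. Finally, $nP(Z>cn)\le c^{-1}\E[Z\mathbb 1_{Z>cn}]\to 0$, and this bound is uniform in $(t,y)$.
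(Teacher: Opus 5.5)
Your proposal is correct and follows essentially the same route as the paper: the one-step increment identity, a uniform Taylor expansion of $\nabla\Phi$ near $\theta^*$ for the drift, and dominated convergence along arbitrary sequences $\theta_j\to\theta^*$ for the second moment (using a.s.\ differentiability of $\phi(X_1,\cdot)$ at $\theta^*$ and the envelope of Assumption~\ref{assump:noise2}). Likewise, the jump condition is handled by a tail bound $nP(Z>cn)\to 0$. The only cosmetic difference is in (iii): you split off the centered noise $\varepsilon_k$, so the cross terms vanish exactly and $\sup_{B(\theta^*,\eta)}\|g-G\|^2$ serves directly as the dominating function, whereas the paper expands the uncentered moment $\E[gg^\top]$ and treats the cross terms involving $G$ separately.
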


\begin{proof}

First, for $n\ge1$ and $k\geq 1$, rewrite
\begin{align}
    \tilde Y_{k}^n & = \frac{k}{\sqrt{n}}(\hat{\theta}_{k}-\theta^*) \nonumber \\
     & = \frac{k}{\sqrt{n}}\left(\hat{\theta}_{k-1}-\frac{\delta}{k}g(X_{k},\hat{\theta}_{k-1}) - \theta^*\right) \nonumber \\
    & = \frac{k-1}{\sqrt{n}}(\hth_{k-1}-\theta^*)+ \frac{1}{\sqrt{n}}(\hth_{k-1}-\theta^*)-\frac{\delta}{\sqrt{n}}g(X_{k},\hth_{k-1}) \nonumber \\
    & = \tilde Y_{k-1}^n+\frac{\tilde Y_{k-1}^n}{k-1}-\frac{\delta}{\sqrt{n}}g(X_{k},\theta^*+\sqrt n \tilde Y_{k-1}^n/(k-1)).\label{eq:dvlpY} 
\end{align}

Now, fix $t\in [\varepsilon,T]$ and $y\in B(\theta^*,R)$. First, note that using \eqref{eq:dvlpY} and independence of the $X_i$'s, $P^n_{\nt}\left(y,B(y,r)^\complement\right)$ can be rewritten as 
\begin{equation} \label{eqn:transition1}
    P^n_{\nt}\left(y,B(y,r)^\complement\right) = P\left(\left\|\frac{y}{\nt}-\frac{\delta}{\sqrt{n}}g\left(X_{\nt},\theta^*+\frac{\sqrt{n}}{\nt-1}y\right)\right\|>r\right).
\end{equation}
Now, by the triangle inequality, 
\begin{align*}
    & \left\|\frac{y}{\nt}-\frac{\delta}{\sqrt{n}}g\left(X_{\nt},\theta^*+\frac{\sqrt{n}}{\nt-1}y\right)\right\| \\
    & \quad \quad \quad \quad \leq \frac{\|y\|}{\nt}+\frac{\delta}{\sqrt{n}}\left\|g\left(X_{\nt},\theta^*+\frac{\sqrt{n}}{\nt-1}y\right)\right\| \\
    & \quad \quad \quad \quad \leq \frac{\|y\|}{\nt}+ \frac{\delta}{\sqrt n}\left\|G\left(\theta^*+\frac{\sqrt n}{\nt-1}y\right)\right\| \\ 
    & \quad \quad \quad \quad \quad \quad \quad \quad \quad +\frac{\delta}{\sqrt n}\left\|g\left(X_{\nt},\theta^*+\frac{\sqrt{n}}{\nt-1}y\right)-G\left(\theta^*+\frac{\sqrt n}{\nt-1}y\right)\right\| \\ 
    & \quad \quad \quad \quad \leq \frac{(1+\delta L)\|y\|}{\nt-1}+\frac{\delta}{\sqrt n}\left\|g\left(X_{\nt},\theta^*+\frac{\sqrt{n}}{\nt-1}y\right)-G\left(\theta^*+\frac{\sqrt n}{\nt-1}y\right)\right\| \\
    & \quad \quad \quad \quad \leq \frac{(1+\delta L)(\|\theta^*\|+R)}{\neps-1}+\frac{\delta}{\sqrt n}\left\|g\left(X_{\nt},\theta^*+\frac{\sqrt{n}}{\nt-1}y\right)-G\left(\theta^*+\frac{\sqrt n}{\nt-1}y\right)\right\|
\end{align*}
where we used Assumption~\ref{assump:quadratic_growth} in the inequality before the last. Therefore, if $n$ is sufficiently large so the first term in the last line is not larger than $r/2$, this and \eqref{eqn:transition1} yield 
\begin{align}
     P^n_{\nt}\left(y,B(y,r)^\complement\right) & \leq P\left(\left\|g\left(X_{\nt},\theta^*+\frac{\sqrt{n}}{\nt-1}y\right)-G\left(\theta^*+\frac{\sqrt{n}}{\nt-1}y\right)\right\|>\frac{r\sqrt n}{2\delta}\right) \nonumber \\
     & = P\left(\left\|g\left(X_{1},\theta^*+\frac{\sqrt{n}}{\nt-1}y\right)-G\left(\theta^*+\frac{\sqrt{n}}{\nt-1}y\right)\right\|>\frac{r\sqrt n}{2\delta}\right). \label{eqn:bound_V}
\end{align}
Now, further assume that $n$ is sufficiently large so $\sqrt n R/(n\varepsilon-1)\leq\eta$, where $\eta$ is defined in Assumption~\ref{assump:noise2}. Letting $V=\sup_{\theta\in B(\theta^*,\eta)}\|g(X_1,\theta)-G(\theta)\|^2$, we obtain that $nP^n_{\nt}\left(y,B(y,r)^\complement\right)\leq nP(V>r^2n/(2\delta)^2)$, which goes to $0$ as $n\to\infty$ by Lemma~\ref{lemma:moments}. This proves the first statement. 

For the second statement, fix again $t\in [\varepsilon,T]$ and $y\in B(\theta^*,R)$ and let $n\geq 1$ be sufficiently large so $n\varepsilon\geq 2$, use independence of the $X_i$'s and \eqref{eq:dvlpY} to write $a_n(t,y)$ as
\begin{align*}
    a_n(t,y) & = n\E\left[\frac{y}{\nt-1}-\frac{\delta}{\sqrt n}g(X_{\nt},\theta^*+\sqrt ny/(\nt-1))\right] \\
    & = \frac{ny}{\nt-1}-\sqrt n\delta G(\theta^*+\sqrt ny/(\nt-1)).
\end{align*}
Now, Assumption~\ref{assump:Phi-b} implies that $\Phi$ is differentiable in a neighborhood of $\theta^*$, yielding that $G(\theta^*+\sqrt ny/(\nt-1))=\nabla\Phi(\theta^*+\sqrt ny/(\nt-1))$. Moreover, $\Phi$ is twice continuously differentiable in a neighborhood of $\theta^*$ and $\nabla\Phi(\theta^*)=0$, so $\sqrt n\nabla\Phi(\theta^*+y/(\nt-1))\xrightarrow[n\to\infty]{} t^{-1}\nabla^2\Phi(\theta^*)y$ uniformly in $t\in [\varepsilon,T]$ and $y\in B(\theta^*,R)$. Hence, 
$$a_n(t,y)\xrightarrow[n\to\infty]{} t^{-1}(y-\delta\nabla^2\Phi(\theta^*)y)=a(t,y)$$
uniformly in $t\in [\varepsilon,T]$ and $y\in B(\theta^*,R)$, which proves the second statement. 

Finally, for the third statement, again using \eqref{eq:dvlpY}, we write 
\begin{align*}
    nb_n(t,y) & = n\E\Bigg[\left(\frac{y}{\nt-1}-\frac{\delta}{\sqrt n}g\left(X_{1},\theta^*+\frac{\sqrt n}{\nt-1}y\right)\right) \\ 
    & \quad \quad \quad \quad \quad \quad \quad\left(\frac{y}{\nt-1}-\frac{\delta}{\sqrt n}g\left(X_{1},\theta^*+\frac{\sqrt n}{\nt-1}y\right)\right)^\top\Bigg] \\
    & = \frac{n}{(\nt-1)^2}yy^\top+\delta^2\E\left[g\left(X_{1},\theta^*+\frac{\sqrt n}{\nt-1}y\right)g\left(X_{1},\theta^*+\frac{\sqrt n}{\nt-1}y\right)^\top\right] \\ 
    & \quad \quad \quad -\frac{\sqrt n\delta}{\nt-1}yG\left(\theta^*+\frac{\sqrt n}{\nt-1}y\right)^\top -\frac{\sqrt n\delta}{\nt-1}G\left(\theta^*+\frac{\sqrt n}{\nt-1}y\right)y^\top.
\end{align*}
Using a similar argument as above, the last two terms go to $0$ uniformly in $t\in [\varepsilon,T]$ and $y\in B(\theta^*,R)$. It is clear that the first term does too. Hence, it is now sufficient to check that 
$$F(t,y):=\E\left[g\left(X_{1},\theta^*+\frac{\sqrt n}{\nt-1}y\right)g\left(X_{1},\theta^*+\frac{\sqrt n}{\nt-1}y\right)^\top\right]\xrightarrow[n\to\infty]{} \Gamma$$
uniformly in $t\in [\varepsilon,T]$ and $y\in B(\theta^*,R)$. First, by Lemma~\ref{lemma:unif_conv}, it is sufficient to show that for all sequences $(t_n)_{n\geq 1}\subseteq [\varepsilon,T]$ and $(y_n)_{n\geq 1}\subseteq B(\theta^*,R)$, $F(t_n,y_n)\xrightarrow[n\to\infty]{}\Gamma$. Note that for such sequences, we have that $\frac{\sqrt n}{\lfloor nt_n\rfloor-1}y_n\xrightarrow[n\to\infty]{}0$. Hence, let us simply consider any sequence $(u_n)_{n\geq 1}$ in $\R^d$ such that $u_n\xrightarrow[n\to\infty]{} 0$ and show that 
\begin{equation} \label{eqn:conv_2nd}
\E\left[g\left(X_{1},\theta^*+u_n\right)g\left(X_{1},\theta^*+u_n\right)^\top\right]\xrightarrow[n\to\infty]{} \Gamma.
\end{equation}
First, without loss of generality, assume that $\|u_n\|\leq\eta$ for all $n\geq 1$. Since $\phi(X_1,\cdot)$ is almost surely differentiable at $\theta^*$, \cite[Lemma 9]{article:Brunel-2025} yields that $g(X_1,\theta^*+u_n)\xrightarrow[n\to\infty]{} g(X_1,\theta^*)$ almost surely. Moreover, the operator norm of $g(X_1,\theta^*+u_n)g(X_1,\theta^*+u_n)^\top$ is given by $\|g(X_1,\theta^*+u_n)\|^2$, which is dominated by $\sup_{\theta\in B(\theta^*,\eta)}\|g(X_1,\theta)\|^2$ which, by Assumption~\ref{assump:noise2}, is integrable. Therefore, the dominated convergence theorem yields \eqref{eqn:conv_2nd}.

\end{proof}

We are now ready to prove Theorem~\ref{thm:main}. \newline

\begin{proof}[Proof of Theorem~\ref{thm:main}]

    First, for all positive integers $p$ and $n$, let $Y^{n,p}$ be the restriction of the stochastic process $Y^n$ to the interval $[1/p,\infty)$. Then, we have the following lemma, whose proof is deferred to the appendix.
    \begin{lemma} \label{lem:cv process}
        For all $p\geq 1$, the process $(Y^{n,p})_{n\geq 1}$ is tight in $\mathcal C^0([1/p,\infty),\R^d)$ equipped with the topology induced by uniform convergence on compact intervals. 
    \end{lemma}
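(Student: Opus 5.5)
We will prove Lemma~\ref{lem:cv process} with the Arzelà--Ascoli type criterion for tightness in $\mathcal C^0([1/p,\infty),\R^d)$ (see \cite[Theorem 8.2]{book:Billingsley1968}). Since the topology is uniform convergence on compacts, it suffices to work on each $[1/p,T]$, $T>1/p$. There we need two things: tightness of the initial values $Y^n_{1/p}$, and control of the modulus of continuity, namely
$$\lim_{h\downarrow 0}\limsup_{n\to\infty} P\Big(\sup_{1/p\le s\le t\le T,\ t-s\le h}\|Y^n_t-Y^n_s\|>\rho\Big)=0 \quad \text{for every } \rho>0.$$
Tightness of the initial values is immediate from Theorem~\ref{thm:tightness}. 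Indeed $Y^n_{1/p}$ is a convex combination of $\tilde Y^n_k$ with $k\approx n/p$, and $\tilde Y^n_k=(k/\sqrt n)(\theta_k-\theta^*)=\sqrt{k/n}\,\sqrt k(\theta_k-\theta^*)$, with $\sqrt{k/n}\to p^{-1/2}$.

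For the modulus of continuity, we will use the decomposition \eqref{eq:dvlpY}. Summing it from $j=k+1$ to $m$ gives
$$\tilde Y^n_m-\tilde Y^n_k=\sum_{j=k+1}^m\Big(\frac{\tilde Y^n_{j-1}}{j-1}-\frac{\delta}{\sqrt n}G(\theta_{j-1})\Big)-\frac{\delta}{\sqrt n}\sum_{j=k+1}^m\varepsilon_j,$$
with $\varepsilon_j=g(X_j,\theta_{j-1})-G(\theta_{j-1})$. Take $n/p\le k<m\le nT$. Assumption~\ref{assump:quadratic_growth} gives $\|G(\theta_{j-1})\|\le L\|\theta_{j-1}-\theta^*\|=L\sqrt n\|\tilde Y^n_{j-1}\|/(j-1)$. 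Each summand of the drift part is therefore bounded by $(1+\delta L)p\,\|\tilde Y^n_{j-1}\|/n$. This leads to
$$\|\text{drift}\|\le (1+\delta L)\,p\,\frac{m-k}{n}\max_{n/p\le j\le nT}\|\tilde Y^n_j\|.$$
For the drift part to be small for small $h$, we need $\max_{n/p\le j\le nT}\|\tilde Y^n_j\|$ to be tight, uniformly in $n$. That is a maximal version of Theorem~\ref{thm:tightness}, and it is not implied by marginal tightness. We will obtain it from the same decomposition. The martingale part is handled by Doob's inequality, using the bound $\E[\|\varepsilon_j\|^2|\mathcal F_{j-1}]\le\sigma^2$ from Assumption~\ref{assump:noise1}:
$$\E\Big[\max_{k\le nT}\Big\|\frac{\delta}{\sqrt n}\sum_{j=\lceil n/p\rceil}^{k}\varepsilon_j\Big\|^2\Big]\le 4\delta^2\sigma^2T.$$
Writing $M_n=\max_{n/p\le j\le nT}\|\tilde Y^n_j\|$, this gives $M_n\le \|\tilde Y^n_{\lceil n/p\rceil}\|+C\int$-type terms. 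By the discrete Gronwall lemma applied pathwise on $\{j\le nT\}$, we get
$$M_n\le e^{(1+\delta L)pT}\Big(\|\tilde Y^n_{\lceil n/p\rceil}\|+\max_{k}\Big\|\frac{\delta}{\sqrt n}\sum_{j}\varepsilon_j\Big\|\Big).$$
This is tight by the first step and Doob's inequality.

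With $M_n$ tight, the drift part contributes at most $C_p h M_n$ on windows of length $h$, which is small in probability. The remaining obstacle, and the main one, is the modulus of continuity of the rescaled martingale $\frac{\delta}{\sqrt n}\sum_{j\le nt}\varepsilon_j$. Here we will use the standard bound via a partition of $[1/p,T]$ into blocks of length $h$ (\cite[Theorem 8.3]{book:Billingsley1968}). It requires
$$\limsup_n\ h^{-1}P\Big(\max_{\text{block}}\Big\|\frac{1}{\sqrt n}\sum\varepsilon_j\Big\|>\rho\Big)\to 0 \quad \text{as } h\to0.$$
With second moments only, Doob's inequality gives the bound $\sigma^2h/\rho^2$, which is not $o(h)$. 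A fourth-moment or truncation argument is therefore needed.

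We first localize. By Theorem~\ref{thm:consistency}, with high probability $\theta_{j-1}\in B(\theta^*,\eta)$ for all $j\ge n/p$ once $n$ is large. On that event $\|\varepsilon_j\|^2\le V_j:=\sup_{B(\theta^*,\eta)}\|g(X_j,\cdot)-G\|^2$, which are i.i.d. and integrable by Assumption~\ref{assump:noise2}.

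Next we truncate at level $\kappa\sqrt n$ and recenter. For the truncated increments $\|\varepsilon_j\mathbb 1\|\le\kappa\sqrt n$, the Burkholder inequality with fourth moments gives, on a block,
$$\E\|\cdot\|^4\lesssim h^2\sigma^4+h\kappa^2\E V,$$
which yields $o(h)$ after choosing $\kappa\to0$ slowly. The truncated-off part and the recentering error are controlled by $\sum_{j\le nT}P(V_j>\kappa^2n)\to0$ and by $\sqrt n\,\E[V^{1/2}\mathbb 1_{V>\kappa^2 n}]\to 0$, using Lemma~\ref{lemma:moments}. Combining the four pieces gives the criterion, and hence tightness of $Y^{n,p}$.
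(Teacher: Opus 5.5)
Your proof is correct, but it takes a different route from the paper.

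\textbf{The paper's route.} The paper proves relative compactness and then invokes Prokhorov's theorem. It uses Theorem~\ref{thm:tightness} for tightness of $Y^{n,p}_{1/p}$, exactly as you do. It then extracts a subsequence along which $Y^{\phi(n),p}_{1/p}$ converges in law and makes this almost sure via Skorohod. Finally it applies the Stroock--Varadhan Markov-chain-to-diffusion theorem (Theorem 11.2.3), with Proposition~\ref{prop:cv coeff} supplying the hypotheses; the adaptation to time-inhomogeneous chains is only asserted in a footnote. This yields convergence in law of $Y^{\phi(n),p}$ to a diffusion with generator $(G_t)_{t\ge 1/p}$.

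\textbf{Your route.} You verify Billingsley's modulus-of-continuity criterion directly from the drift/martingale decomposition.
\begin{itemize}
\item The drift is controlled by Assumption~\ref{assump:quadratic_growth} together with a maximal bound on $\max_{n/p\le j\le nT}\|\tilde Y^n_j\|$. You correctly note that this maximal bound does not follow from marginal tightness, and you obtain it from discrete Gronwall plus Doob.
\item The martingale modulus is controlled by localization through Theorem~\ref{thm:consistency}, truncation using the envelope $V$ of Assumption~\ref{assump:noise2}, and a Burkholder fourth-moment bound on blocks of length $h$.
\end{itemize}
This is self-contained and checkable line by line, and it avoids both the black-box theorem and Proposition~\ref{prop:cv coeff}.

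\textbf{What each route buys.} Yours yields only tightness. The proof of Theorem~\ref{thm:main} actually uses more than the lemma's statement: it relies on subsequential limits having generator $(G_t)$, which the paper gets for free from Stroock--Varadhan. With your route, that identification must be done separately. For example, you could use Proposition~\ref{prop:cv coeff}(ii)--(iii) to show that $f(Y^n_t)-\int_s^t G_uf(Y^n_u)\,du$ is asymptotically a martingale.

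\textbf{Small points to tighten.}
\begin{itemize}
\item The recentering error does not follow from Lemma~\ref{lemma:moments} itself. It follows from $\sqrt n\,\E[V^{1/2}\mathds 1_{\{V>\kappa^2 n\}}]\le \kappa^{-1}\E[V\mathds 1_{\{V>\kappa^2 n\}}]\to 0$ by dominated convergence.
\item Implement the localization by replacing $\varepsilon_j$ with $\varepsilon_j\mathds 1_{\{\theta_{j-1}\in B(\theta^*,\eta)\}}$. This keeps the martingale-difference property, gives $\|\cdot\|^2\le V_j$ everywhere, and coincides with the original martingale on the high-probability event.
\item It is simpler to fix $\kappa$, take $\limsup_n$, and only then let $h\to 0$ and $\kappa\to0$, rather than choosing $\kappa_n\to 0$.
\end{itemize}
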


We will now show that any subsequence of $(Y^n)_{n\ge 1}$ has a further subsequence that converges weakly to a process $(Z_t)_{t>0}$ with generator $G$ and that satisfies $Z_t\xrightarrow[t\downarrow 0]{} 0$ in probability. In order to avoid renumbering, let us simply show that $(Y^n)_{n\geq 1}$ has such a subsequence. 

Lemma~\ref{lem:cv process} shows the existence of a subsequence $(\tilde Y^{\phi_1(n),1})_{n\ge 1}$ of $(Y^{n,1})_{n\ge 1}$ that converges weakly in $\mathcal C^0([1,\infty),\R^d)$ to some process with generator $(G_t)_{t\geq 1}$. Similarly, one can extract a subsequence $(\tilde Y^{\phi_2(n),2})_{n\ge 1}$ of $(Y^{\phi_1(n),2})_{n\ge 1}$ that converges weakly in $\mathcal C^0([1/2,\infty),\R^d)$ to some process with generator $(G_t)_{t\geq 1/2}$. Reiterating this construction, for all integers $p\geq 1$, we can extract a subsequence $(\tilde Y^{\phi_p(n),p})_{n\ge 1}$ of $(Y^{\phi_{p-1}(n),p})_{n\ge 1}$ that converges weakly in $\mathcal C^0([1/p,\infty),\R^d)$ to some process with generator $(G_t)_{t\geq 1/p}$, for all integers $p\geq 1$. Now, fix $\varepsilon>0$ and consider the subsequence $(Y^{\phi_n(n)})_{n\geq 1}$ of $(Y^n)_{n\geq 1}$. Let $p\geq 1$ be a sufficiently large integer such that $1/p\leq\varepsilon$. Then -- except maybe for the first terms -- the restriction of $(Y^{\phi_n(n)})_{n\geq 1}$ to $[1/p,\infty)$ is a subsequence of $(Y^{\phi_p(n),p})_{n\geq 1}$, hence, it converges weakly in $\mathcal C^0([1/p,\infty),\R^d)$ to some process with generator $(G_t)_{t\geq 1/p}$. Therefore, the restriction of $(Y^{\phi_n(n)})_{n\geq 1}$ to $[\varepsilon,\infty)$ converges weakly in $\mathcal C^0([\varepsilon,\infty),\mathbb{R}^d)$ to some process with generator $(G_t)_{t\geq \varepsilon}$. In particular, the restriction of $(Y^{\phi_n(n)})_{n\geq 1}$ to $[\varepsilon,\infty)$ is tight in $\mathcal{C}^0([\epsilon,\infty),\mathbb{R}^d)$ for every $\varepsilon>0$. Hence Lemma~\ref{lemma:weak-conv-eps} ensures that $(Y^{\phi_n(n)})_{n\geq 1}$ is tight in $\mathcal{C}^0((0,\infty),\mathbb{R}^d)$. 
Moreover, if $Z$ is an accumulation point of $(Y^{\phi_n(n)})_{n\geq 1}$ then $Z$ is a process with generator $G$. To conclude, we need to show that $Z_t\xrightarrow[t\downarrow 0]{} 0$ in probability. Consider a subsequence $(Y^{\psi(n)})_{n\ge1}$ of $(Y^{\phi(n)})_{n\ge1}$ that converges in distribution to $Z$. First, note that for all $t>0$, $Y_t^{\psi_n(n)}\xrightarrow[n\to\infty]{} Z_t$ in distribution, since the map $g\in \mathcal C^0((0,\infty),\R^d)\mapsto g(t)$ is continuous for the topology induced by uniform convergence on compact intervals. Therefore, for all $\alpha>0$ and $t>0$,
\begin{align*}
    P(\|Z_t\|>\alpha) & = \lim_{n\to\infty} P(\|Y_t^{\psi_n(n)}\|>\alpha) \\
    & \leq \sup_{n\geq 1} P(\sqrt t \sqrt{n}\|\theta_n-\theta^*\|>\alpha) \xrightarrow[t\downarrow 0]{} 0
\end{align*}
by Theorem~\ref{thm:tightness}. Therefore, $Z_t\xrightarrow[t\downarrow 0]{}0$ in probability. Let $Y$ be the process defined in \eqref{eqn:diffusion}. Then, the uniqueness statement in Proposition~\ref{prop: sol EDS} ensures that $Z$ and $Y$ are indistinguishable. Hence, $Y$ is the unique accumulation point of $(Y^{\phi(n)})_{n\ge1}$ which is tight, and the result follows.
\end{proof}

For the proof of Theorem~\ref{thm:main}, we could in fact replace Assumption~\ref{assump:noise2}, which may be quite stringent in certain situations, with the following two assumptions.

\begin{assumption} \label{assump:noise21}
    There exist $\eta,\varepsilon,M>0$ such that $\E[\|g(X_1,\theta)-G(\theta)\|^{2+\varepsilon}]\leq M$ for all $\theta\in B(\theta^*,\eta)$.
\end{assumption}

\begin{assumption} \label{assump:noise22}
    The map $\theta\in\R^d\mapsto\E[g(X_1,\theta)g(X_1,\theta)^\top]$ is continuous at $\theta^*$.
\end{assumption}

Indeed, Assumption~\ref{assump:noise2} was used in two places. First, to prove that the right hand side in \eqref{eqn:bound_V} goes to $0$ as $n\to\infty$ (via Lemma~\ref{lemma:moments}), which would still hold under Assumption~\ref{assump:noise1}. Second, to prove \eqref{eqn:conv_2nd}, which would be a direct consequence of Assumption~\ref{assump:noise2}. 
Moreover, note that under Assumptions~\ref{assump:Phi-b} and \ref{assump:quadratic_growth}, Assumption~\ref{assump:noise2} is equivalent for the map $\theta\in\R^d\mapsto \textrm{Var}(g(X_1,\theta))$ to be continuous at $\theta^*$, where $\textrm{Var}(g(X_1,\theta))$ is simply the covariance matrix of the noise term in the SGD step when the subgradient $G$ of $\Phi$ is evaluated at $\theta$.

As a consequence of Theorem~\ref{thm:main}, we obtain the asymptotic normality of the iterates $(\theta_n)_{n\geq 1}$ of the stochastic algorithm. We defer its proof to the appendix.

\begin{corollary}
\label{cor:CLT}
    Under the same assumptions as in Theorem~\ref{thm:main}, we have that
    $$
    \sqrt{n}(\hth_n-\theta^*)\xrightarrow[n\to\infty]{}\mathcal{N}_d(0,\Sigma)
    $$
    where $\Sigma = \delta\displaystyle\int_0^\infty  e^{t/\delta}e^{-t\nabla^2\Phi(\theta^*)} \Gamma e^{-t\nabla^2\Phi(\theta^*)} \diff t$.
\end{corollary}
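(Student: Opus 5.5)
The plan is to evaluate the functional limit of Theorem~\ref{thm:main} at the single time $t=1$. Since $Y^n_1=\tilde Y^n_n=\sqrt n(\hth_n-\theta^*)$ exactly (for $t=1$ we have $k=n$ in the interpolation), and the evaluation map $f\in\mathcal C^0((0,\infty),\R^d)\mapsto f(1)$ is continuous for the topology of uniform convergence on compact intervals, the continuous mapping theorem gives $\sqrt n(\hth_n-\theta^*)\to Y_1$ in distribution. The proof therefore reduces to identifying the law of $Y_1$.

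By Proposition~\ref{prop: sol EDS}, applied with $H=\delta\nabla^2\Phi(\theta^*)-I_d$ and $\Sigma$ there replaced by $\delta^2\Gamma$, we have almost surely
$$Y_1=\int_0^1 e^{\log(s)H}\,\delta\Gamma^{1/2}\,\diff B_s .$$
The integrand is deterministic. It is square integrable on $(0,1]$ because $\|e^{\log(s)H}\|\le s^{\delta\lambda_1-1}$ and $\delta\lambda_1>1$. Hence $Y_1$ is a centered Gaussian vector, being a Wiener integral of a deterministic matrix function. By the Itô isometry, its covariance is
$$\delta^2\int_0^1 e^{\log(s)H}\,\Gamma\, e^{\log(s)H}\,\diff s,$$
using that $H$ is symmetric.

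It remains to match this with the stated $\Sigma$. Substitute $s=e^{-u/\delta}$, so that $\diff s=-\delta^{-1}e^{-u/\delta}\diff u$ and $\log s=-u/\delta$. Since $H$ commutes with $I_d$, we get
$$e^{\log(s)H}=e^{-(u/\delta)(\delta\nabla^2\Phi(\theta^*)-I_d)}=e^{u/\delta}e^{-u\nabla^2\Phi(\theta^*)}.$$
The covariance then becomes
$$\delta\int_0^\infty e^{2u/\delta}e^{-u/\delta}\,e^{-u\nabla^2\Phi(\theta^*)}\Gamma e^{-u\nabla^2\Phi(\theta^*)}\,\diff u,$$
which is exactly $\Sigma$. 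Convergence of this integral follows from $\lambda_1>1/\delta$: the integrand is $O(e^{u(1/\delta-2\lambda_1)})$ and $1/\delta-2\lambda_1<0$.

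There is no serious obstacle. The one point to check carefully is that the stochastic integral near $s=0$ is a genuine $L^2$ Wiener integral, so that Gaussianity and the isometry apply. This is exactly the integrability just verified, and it is where the condition $\delta>1/\lambda_1$ enters.
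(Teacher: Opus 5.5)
Your proposal is correct and follows the paper's proof essentially step for step: you evaluate the functional limit at $t=1$ through the continuous evaluation map, identify $Y_1$ as the Wiener integral from Proposition~\ref{prop: sol EDS}, and compute its covariance with the isometry and the substitution $u=-\delta\log s$. Your intermediate identity $e^{\log(s)H}=e^{u/\delta}e^{-u\nabla^2\Phi(\theta^*)}$ is in fact cleaner than the paper's displayed middle steps, which contain small slips (a sign in the exponent of $Y_1$ and the scalar factor written as $e^{-2\log(s)/\delta}$ instead of $s^{-2}$), though both arrive at the same $\Sigma$.
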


For $n\ge1$, define $\hat\theta_n$ as a (measurable) minimizer of the (offline) empirical risk $\theta\in\R^d\mapsto \frac{1}{n}\sum_{i=1}^n\phi(X_i,\theta)$. It is known that under Assumptions~\ref{assump:Phi-b} and \ref{assump:noise1}, 
\begin{equation*}
    \sqrt n(\hat\theta_n-\theta^*)\xrightarrow[n\to\infty]{} \mathcal N_d(0,\Delta)
\end{equation*}
in distribution, where $\Delta=\nabla^2\Phi(\theta^*)^{-1}\Gamma \nabla^2\Phi(\theta^*)^{-1}$ \cite{haberman1989concavity}. The following result compares $\Delta$ with the asymptotic variance of $\theta_n$ obtained in Corollary~\ref{cor:CLT}. In that result, $\|\cdot\|_{\textrm{op}}$ stands for the operator norm. That is, for any symmetric matrix $M\in\R^{d\times d}$, $\|M\|_{\textrm{op}}$ is the largest eigenvalue of $M$ in absolute values. 

\begin{proposition}
\label{thm:bound_var_asymp}
    The matrix $\Sigma-\Delta$ is positive semi-definite and 
$$\|\Sigma-\Delta\|_{\textrm{op}}\leq \frac{(\delta\lambda_d-1)^2}{2\delta\lambda_d-1}\|\Delta\|_{\textrm{op}}.$$ 
\end{proposition}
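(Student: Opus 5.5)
We compute everything in the orthonormal eigenbasis $(e_1,\ldots,e_d)$ of $H:=\nabla^2\Phi(\theta^*)$ and reduce both claims to standard facts about Hadamard (entrywise) products of positive semi-definite matrices. Write $\Gamma_{ij}=e_i^\top\Gamma e_j$. Since $e^{-tH}e_i=e^{-t\lambda_i}e_i$, the formula of Corollary~\ref{cor:CLT} gives
$$e_i^\top\Sigma e_j=\delta\,\Gamma_{ij}\int_0^\infty e^{-t(\lambda_i+\lambda_j-1/\delta)}\,\diff t=\frac{\delta^2\Gamma_{ij}}{\delta\lambda_i+\delta\lambda_j-1}.$$
The integral converges because $\delta\lambda_1>1$, so $\lambda_i+\lambda_j-1/\delta>0$. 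Similarly, $e_i^\top\Delta e_j=\Gamma_{ij}/(\lambda_i\lambda_j)$.

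Subtracting and using the identity $\delta^2 \lambda_i\lambda_j-(\delta\lambda_i+\delta\lambda_j-1)=(\delta\lambda_i-1)(\delta\lambda_j-1)$, we obtain
$$e_i^\top(\Sigma-\Delta)e_j=L_{ij}\,\Delta_{ij},\qquad L_{ij}:=\frac{(\delta\lambda_i-1)(\delta\lambda_j-1)}{\delta\lambda_i+\delta\lambda_j-1},$$
where $\Delta_{ij}=e_i^\top\Delta e_j$. In other words, in this basis $\Sigma-\Delta=L\circ\Delta$ is the Hadamard product of $L=(L_{ij})$ with the matrix of $\Delta$.

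Next we show $L$ is positive semi-definite. Set $c_i=\delta\lambda_i-\tfrac12>0$ and $v_i=\delta\lambda_i-1$. Then
$$L_{ij}=\frac{v_iv_j}{c_i+c_j}=\int_0^\infty (v_ie^{-sc_i})(v_je^{-sc_j})\,\diff s,$$
so $x^\top Lx=\int_0^\infty\big(\sum_i x_iv_ie^{-sc_i}\big)^2\diff s\ge0$. The matrix $\Delta=H^{-1}\Gamma H^{-1}$ is positive semi-definite because $\Gamma$ is. By the Schur product theorem, $L\circ\Delta$ is positive semi-definite, which is the first claim.

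For the norm bound, we invoke the classical inequality (Schur): if $L,A$ are positive semi-definite, then $\|L\circ A\|_{\textrm{op}}\le \max_i L_{ii}\,\|A\|_{\textrm{op}}$. To recall the proof, write $L=\sum_k w_kw_k^\top$. Then
$$x^\top(L\circ A)x=\sum_k (x\circ w_k)^\top A(x\circ w_k)\le\|A\|_{\textrm{op}}\sum_k\|x\circ w_k\|^2=\|A\|_{\textrm{op}}\sum_i L_{ii}x_i^2.$$
Here $L_{ii}=f(\delta\lambda_i)$ with $f(x)=(x-1)^2/(2x-1)$, and $f'(x)=2x(x-1)/(2x-1)^2\ge0$ for $x\ge1$. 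Since every $\delta\lambda_i>1$, the maximum is attained at $\lambda_d$, which gives the stated bound.

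The main step is spotting the factorization $\Sigma-\Delta=L\circ\Delta$ with a positive semi-definite Cauchy-type kernel $L$. Once that is in place, both the positivity and the norm bound follow from standard Hadamard-product facts, and the rest is routine algebra.
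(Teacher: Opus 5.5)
Your proof is correct and follows the paper's route. Both arguments expand $\Sigma$ and $\Delta$ in the eigenbasis of $\nabla^2\Phi(\theta^*)$, write $\Sigma-\Delta$ as the Hadamard product of the Cauchy-type matrix with entries $(\delta\lambda_i-1)(\delta\lambda_j-1)/(\delta\lambda_i+\delta\lambda_j-1)$ and $\Delta$, then conclude with the Schur product theorem and the bound $\|A\circ X\|_{\textrm{op}}\le \max_i A_{ii}\,\|X\|_{\textrm{op}}$, which the paper cites from Bhatia. The only difference is that you also prove the kernel's positivity and this norm inequality, and you check that $\max_i A_{ii}$ is attained at $\lambda_d$; the paper leaves these steps to citation or takes them as implicit. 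Your claim that the Cauchy matrix is positive \emph{semi}-definite is also the accurate one: the paper's ``positive definite'' fails when eigenvalues repeat, though semi-definiteness is all either argument needs.
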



Now, we give a brief description of the asymptotic stochastic process $Y$. First, note that $Y$ is a centered, Gaussian process, thanks to the integral representation given in Proposition~\ref{prop: sol EDS}. The next result gives an estimate for the supremum of its norm on any bounded interval.

\begin{theorem}
    \label{thm:bound_sup_y} 
    There exists a universal constant $C>0$ such that for all $T>0$, 
    $$\E\big[\sup_{0<t\leq T}\|Y_t\|\big] \le C \delta\sqrt{\|\Gamma^{1/2}\|_FT}$$
    where $\|.\|_F$ denotes the Frobenius norm.
\end{theorem}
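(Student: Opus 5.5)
The plan is to work coordinatewise in the eigenbasis of $\nabla^2\Phi(\theta^*)$, use the explicit representation of Proposition~\ref{prop: sol EDS}, and control the supremum by Doob's maximal inequality applied on dyadic time blocks. With $H=\delta\nabla^2\Phi(\theta^*)-I_d$ and eigenvalues $\mu_i=\delta\lambda_i-1>0$ (positive since $\delta>1/\lambda_1$), Proposition~\ref{prop: sol EDS} gives almost surely
$$Y_t=\sum_{i=1}^d t^{-\mu_i}M^i_t\,e_i,\qquad M^i_t=\delta\, e_i^\top\int_0^t s^{\mu_i}\Gamma^{1/2}\diff B_s .$$
Each $M^i$ is a continuous square-integrable martingale with $\E[(M^i_u)^2]=\delta^2\|\Gamma^{1/2}e_i\|^2\,u^{2\mu_i+1}/(2\mu_i+1)$. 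Since $\|Y_t\|\le\sum_i|t^{-\mu_i}M^i_t|$, it suffices to bound $\E[\sup_{0<t\le T}|t^{-\mu_i}M^i_t|]$ for each $i$ and then sum over $i$.

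\textbf{Step 1 (dyadic blocks).} Split $(0,T]$ into the blocks $I_k=(2^{-k-1}T,2^{-k}T]$, $k\ge0$. On $I_k$ we have $t^{-\mu_i}\le(2^{-k-1}T)^{-\mu_i}$, hence $\sup_{t\in I_k}|t^{-\mu_i}M^i_t|\le(2^{-k-1}T)^{-\mu_i}\sup_{s\le 2^{-k}T}|M^i_s|$.

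\textbf{Step 2 (Doob's inequality).} Doob's $L^2$ maximal inequality and Cauchy--Schwarz give, with $u=2^{-k}T$,
$$\E\Big[\sup_{s\le u}|M^i_s|\Big]\le 2\,\delta\,\|\Gamma^{1/2}e_i\|\,\frac{u^{\mu_i+1/2}}{\sqrt{2\mu_i+1}} .$$
Each block therefore contributes at most $2^{\mu_i+1}\delta\|\Gamma^{1/2}e_i\|(2^{-k}T)^{1/2}/\sqrt{2\mu_i+1}$.

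\textbf{Step 3 (summation).} Bounding the supremum over the union of the blocks by the sum of the block suprema, the geometric series in $2^{-k/2}$ sums to a factor $(1-2^{-1/2})^{-1}$. Summing over $i$ and using Cauchy--Schwarz, $\sum_i\|\Gamma^{1/2}e_i\|\le\sqrt d\,\|\Gamma^{1/2}\|_F$. This gives a bound of the form $C'\delta\sqrt T$ times a functional of $\Gamma^{1/2}$.

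\textbf{Main obstacle.} The difficult step is matching this to the statement exactly as worded, for two reasons. First, the prefactor $2^{\mu_i}/\sqrt{2\mu_i+1}$ depends on $\mu_i$. To remove it, I would first replace the crude dyadic bound by a finer grid $(\rho^{k+1}T,\rho^kT]$ with $\rho=\rho(\mu_i)$ chosen close to $1$ so that $\rho^{-\mu_i}$ stays bounded. Failing that, I would use the time change $t=e^u$, under which $e^{-u/2}Y_{e^u}$ is a stationary Ornstein--Uhlenbeck process, and apply a Gaussian (Dudley/Borell--TIS) bound to the centered Gaussian process $Y$ on $(0,T]$, whose variance $\E\|Y_t\|^2=\delta^2 t\sum_i\|\Gamma^{1/2}e_i\|^2/(2\mu_i+1)$ is at most $\delta^2\|\Gamma^{1/2}\|_F^2\,t$. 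Second, the stated right-hand side $\delta\sqrt{\|\Gamma^{1/2}\|_F\,T}$ carries $\|\Gamma^{1/2}\|_F$ to the power $1/2$. Multiplying $\Gamma$ by $c$ multiplies $Y$ by $\sqrt c$ but the stated bound only by $c^{1/4}$, so a universal constant cannot hold uniformly in the scale of $\Gamma$. To obtain the bound as worded, I would therefore have to either fix a normalization of $\Gamma$ or absorb the missing factor $\|\Gamma^{1/2}\|_F^{1/2}$ into $C$. I expect this reconciliation, rather than the martingale estimates themselves, to be the crux.
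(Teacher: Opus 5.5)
The genuine gap is that your estimates cannot produce a universal constant, because you aggregate \emph{first} moments in $\ell^1$ twice.

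\begin{itemize}
\item \textbf{Across coordinates.} The chain $\|Y_t\|\le\sum_i|Y_t(i)|$ followed by $\sum_i\|\Gamma^{1/2}e_i\|\le\sqrt d\,\|\Gamma^{1/2}\|_F$ costs a factor $\sqrt d$. This is sharp for $\Gamma=I_d$, and you do not flag it.
\item \textbf{Across time blocks.} A finer grid does not rescue the sum of block suprema. With ratio $\rho$, summing your block bounds gives $2\delta\|\Gamma^{1/2}e_i\|\sqrt T\,\rho^{-\mu_i}/\big((1-\sqrt\rho)\sqrt{2\mu_i+1}\big)$. Since $\rho^{-\mu_i}/(1-\sqrt\rho)\ge 2\mu_i+1$ for every $\rho\in(0,1)$, you are left with a factor at least of order $\sqrt{2\delta\lambda_i-1}$, which is unbounded in $\delta\lambda_i$.
\item \textbf{The fallback.} Dudley plus Borell--TIS is the paper's route, but its key input is missing. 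The variance bound $\E\|Y_t\|^2\le\delta^2\|\Gamma^{1/2}\|_F^2t$ does not control the entropy numbers in Dudley's integral. What is needed is the increment bound $\E|Y_s(i)-Y_t(i)|^2\le\delta^2\langle e_i,\Gamma e_i\rangle|s-t|$. The paper derives it from the explicit covariance and the concavity of $t\mapsto g_i(s,t)$, so that the canonical metric is dominated by that of a rescaled Brownian motion.
\end{itemize}

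The missing idea is to aggregate \emph{second} moments. The paper uses Borell--TIS to upgrade the Dudley bound to $\E[\sup_t|Y_t(i)|^2]\lesssim\delta^2\langle e_i,\Gamma e_i\rangle T$. It then uses $\E[\sup_t\|Y_t\|]\le\big(\sum_i\E[\sup_t|Y_t(i)|^2]\big)^{1/2}$, which involves $\sum_i\langle e_i,\Gamma e_i\rangle=\Tr\Gamma$ and no $d$.

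Your martingale route can be repaired in the same spirit, and it then becomes a fully elementary alternative to Dudley. Bound $\sup_{(0,T]}|Y(i)|^2$ by the sum of the \emph{squared} block suprema and apply Doob's $L^2$ inequality directly, without Cauchy--Schwarz. This gives
\[
\E\Big[\sup_{0<t\le T}|Y_t(i)|^2\Big]\le\frac{4\rho^{-2\mu_i}}{(2\mu_i+1)(1-\rho)}\,\delta^2\langle e_i,\Gamma e_i\rangle\,T .
\]
With $\rho=2\mu_i/(2\mu_i+1)$ the prefactor is below $4e$, and summing over $i$ under the square root gives the bound with $C=2\sqrt e$.

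Your scaling objection is correct for the standard Frobenius norm. However, the right reconciliation is not to absorb $\|\Gamma^{1/2}\|_F^{1/2}$ into $C$ or normalize $\Gamma$, since either forfeits universality. Both arguments produce $\sqrt{T\,\Tr\Gamma}=\sqrt T\,\|\Gamma^{1/2}\|_F$. The paper is in effect writing $\|\Gamma^{1/2}\|_F$ for $\Tr\Gamma$: compare the proof of Lemma~\ref{lem:sup_Brownien}, where $\|\Sigma^{1/2}\|_F$ is set equal to the sum of the squared eigenvalues of $\Sigma^{1/2}$. So $C\delta\sqrt{T\,\Tr\Gamma}$ is the claim you should prove.
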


\begin{remark}
    If $B$ is a $d$-dimensional Brownian motion, then by Lemma~\ref{lem:sup_Brownien}, $\E\left[\sup_{0<t\leq T}\|\delta\Gamma^{1/2}B_t\|\right]$ is of order $\delta\sqrt{\|\Gamma^{1/2}\|_FT}$. Consequently, the bound in Theorem \ref{thm:bound_sup_y} is of the same order as the expected supremum of a Brownian motion rescaled by the diffusion coefficient of the process $Y$. Intuitively, this can be explained by the fact that the drift term in the stochastic differential equation defining $Y$ plays the role of a confining force. 
\end{remark}

\section{Conclusion}

In this work, we have established the asymptotic distribution of the whole, long-term trajectory of the SGD algorithm, under fairly minimal assumptions. Here, by long-term, we mean that we have used a topology on $\mathcal C^0((0,\infty),\R^d)$ that is agnostic to the behavior at arbitrarily small times. This is a necessary feature, since the trajectory of the SGD algorithm needs to take some time in order to forget its initialization. 

We have worked with a version of the SGD algorithm that uses decreasing step-sizes $t_n=\delta/n$, $n\geq 1$. This comes at two costs. First, these step-sizes require the knowledge of the smallest eigenvalue of $\nabla^2\Phi(\theta^*)$, since $\delta$ must be chosen larger than its inverse. This can be restrictive or even unrealistic, even though many optimization algorithms (e.g., classical descent algorithms) do require the knowledge of structural parameters of the objective function. Second, the asymptotic variance of $\theta_n$ is strictly worse than the asymptotic variance of the empirical risk minimizer. It is known that choosing larger step-sizes, that is, $t_n$ of order $n^{\alpha}$ for some $\alpha\in (1/2,1)$, and applying Polyak-Ruppert averaging along the SGD trajectory, yields asymptotic efficiency, \emph{i.e.}, it produces an estimator $\tilde\theta_n$, after $n$ steps, whose asymptotic variance coincides with that of the empirical risk minimizer \cite{article:Polyak-1992,fort2015central}. A future work is to establish an asymptotic theory for the whole long-term trajectory of such an algorithm.


\bibliographystyle{plain}
\bibliography{Bibliography}

\appendix


\section{Intermediate lemmas}

\begin{lemma} \label{lemma:convex-RS}
    Let $f:\R^d\to\R$ be convex and have a unique minimizer $x^*\in\R^d$. Let $(x_n)_{n\geq 1}$ be a sequence satisfying 
    \begin{itemize}
        \item $\|x_n-x^*\|\xrightarrow[n\to\infty]{} z$ for some $z\geq 0$ and
        \item $\sum_{n\geq 1}t_n(f(x_n)-f(x^*)) <\infty$
    \end{itemize}
    where $(t_n)_{n\geq 1}$ is a sequence of positive numbers with $\sum_{n\geq 1}t_n=\infty$. Then, $z=0$.
\end{lemma}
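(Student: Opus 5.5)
We argue by contradiction: assume $z>0$ and show that $f(x_n)-f(x^*)$ stays bounded below by a positive constant for all large $n$. Since $\sum_{n\geq 1}t_n=\infty$, this would make $\sum_{n\geq 1}t_n(f(x_n)-f(x^*))$ infinite, contradicting the second hypothesis.

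\emph{Step 1: localize the tail of the sequence.} Because $\|x_n-x^*\|\to z>0$, there is $N$ such that $z/2\leq\|x_n-x^*\|\leq 2z$ for all $n\geq N$. Hence, for $n\geq N$, the points $x_n$ lie in the closed annulus
$$A=\{x\in\R^d: z/2\leq\|x-x^*\|\leq 2z\}.$$

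\emph{Step 2: a uniform gap on the annulus.} The set $A$ is compact, and $f$, being convex and finite on all of $\R^d$, is continuous. So $f$ attains its minimum $m$ on $A$ at some point $\bar x\in A$. Since $\bar x\neq x^*$ and $x^*$ is the unique minimizer, $m=f(\bar x)>f(x^*)$. Setting $c:=m-f(x^*)>0$, we get $f(x_n)-f(x^*)\geq c$ for every $n\geq N$.

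\emph{Step 3: conclude.} For $n\geq N$ we have $t_n(f(x_n)-f(x^*))\geq c\,t_n$, with $t_n>0$. Therefore
$$\sum_{n\geq N}t_n(f(x_n)-f(x^*))\geq c\sum_{n\geq N}t_n=\infty,$$
which contradicts the hypothesis. Hence $z=0$.

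The only step needing care is the positivity of the gap $c$ in Step 2. It relies on two facts: continuity of finite convex functions on $\R^d$, and compactness of the annulus. Convexity itself is not really needed beyond continuity; if one wished to avoid compactness, one could instead use convexity along rays from $x^*$ to compare with values on the sphere of radius $z/2$. With continuity and compactness in hand, the argument is direct.
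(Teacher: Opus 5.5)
Your proof is correct and follows the same contradiction argument as the paper: a uniform gap $f(x_n)-f(x^*)\geq c>0$ for all large $n$, combined with $\sum t_n=\infty$, makes the series diverge. The only difference is where the gap is taken. You use the compact annulus $z/2\leq\|x-x^*\|\leq 2z$, so continuity alone yields $c>0$. The paper takes the infimum over the unbounded set $\{\|x-x^*\|\geq z/2\}$, where positivity silently relies on the convexity-along-rays reduction to the sphere that you mention at the end.
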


\begin{proof}
    Assume that $z>0$, for the sake of contradiction. Since $f$ is convex, it is continuous. And since $x^*$ is its unique minimizer, it must hold that $\eta:=\inf_{x\in\R^d:\|x-x^*\|\geq z/2}f(x)-f(x^*)>0$. Then, for all sufficiently large $n$, $\|x_n-x^*\|\geq z/2$, hence, $f(x_n)-f(x^*)\geq \eta$, contradicting the second assumption of the lemma.
\end{proof}

\begin{lemma} \label{lemma:moments}
Let $V$ be a non-negative, integrable random variable. Then, $sP(V>s)\xrightarrow[s\to\infty]{} 0$. 
\end{lemma}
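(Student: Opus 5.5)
The plan is to write the tail sum as an integral against the tail probability. By Fubini (the layer-cake formula), $\E[V]=\int_0^\infty P(V>u)\,\diff u<\infty$. Since $u\mapsto P(V>u)$ is non-increasing, for every $s>0$ we have
\begin{equation*}
\frac{s}{2}\,P(V>s)\leq \int_{s/2}^{s}P(V>u)\,\diff u\leq \int_{s/2}^{\infty}P(V>u)\,\diff u .
\end{equation*}
The right-hand side is the tail of a convergent integral, so it tends to $0$ as $s\to\infty$. Multiplying by $2$ gives $sP(V>s)\to 0$.

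A second route, which avoids Fubini, goes through dominated convergence. For every $s>0$, pointwise $s\mathbb 1_{\{V>s\}}\leq V\mathbb 1_{\{V>s\}}$. Taking expectations gives
\begin{equation*}
sP(V>s)\leq \E[V\mathbb 1_{\{V>s\}}].
\end{equation*}
Since $V$ is finite almost surely, $V\mathbb 1_{\{V>s\}}\to 0$ almost surely as $s\to\infty$. It is dominated by the integrable variable $V$, so dominated convergence (along any sequence $s_k\to\infty$) gives $\E[V\mathbb 1_{\{V>s\}}]\to 0$. Hence $sP(V>s)\to 0$.

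I would present the second argument, since it is one line plus dominated convergence. There is no real obstacle. The only point needing care is that dominated convergence is applied along arbitrary sequences $s_k\to\infty$, which is enough to get the limit over the continuous parameter $s$.
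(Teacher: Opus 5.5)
Both of your arguments are correct. Your first one is precisely the paper's proof: Tonelli writes $\E[V]=\int_0^\infty P(V>u)\,\diff u$, and monotonicity of the tail function gives $(s/2)P(V>s)\le\int_{s/2}^{s}P(V>u)\,\diff u$. You also make explicit the domination by $\int_{s/2}^\infty P(V>u)\,\diff u$, which the paper leaves implicit.

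The argument you say you would present is a genuinely different route. Instead of working with the distribution function, you use the pointwise bound $s\mathds{1}_{\{V>s\}}\le V\mathds{1}_{\{V>s\}}$ and then dominated convergence. The lemma thus becomes a direct instance of the uniform integrability of a single integrable variable, with no appeal to Fubini--Tonelli or to monotonicity of $u\mapsto P(V>u)$.

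What your route buys is transparency. The inequality $sP(V>s)\le\E[V\mathds{1}_{\{V>s\}}]$ identifies the controlling quantity as a truncated first moment. It therefore shows at once that the convergence is uniform over any uniformly integrable family of random variables, which is the relevant condition if the supremum inside the expectation in Assumption~\ref{assump:noise2} were dropped. The paper's route gives the same after noting $\int_{s/2}^\infty P(V>u)\,\diff u=\E[(V-s/2)_+]$, but less directly. In turn, the paper's version uses nothing beyond the tail function of $V$ and the layer-cake formula.

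Your handling of the continuous parameter via arbitrary sequences $s_k\to\infty$ is fine. Alternatively, $s\mapsto\E[V\mathds{1}_{\{V>s\}}]$ is non-increasing, so convergence along $s=k\in\N$ already suffices.
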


\begin{proof}
    Since $V$ is non-negative and integrable, Fubini-Tonelli's theorem implies that the map $s\geq 0\mapsto P(V>s)$ is integrable, and that $\E[V]=\int_0^\infty P(V>s)\diff s$. Moreover, this map is non-increasing, yielding, for all $s\geq 0$, that 
    $$(s/2)P(V>s) \leq \int_{s/2}^{s}P(V>u)\diff u \xrightarrow[s\to\infty]{}0,$$
    yielding the result.
\end{proof}

\begin{lemma}
    \label{lemma:unif_conv} 
    Let $K\subseteq \R^p$, for some $p\geq 1$ and $f,f_1,f_2,\ldots$ be functions defined on $K$ with values in $\R^q$, for some $q\geq 1$. The following statements are equivalent:
    \begin{enumerate}
        \item $\sup_{x\in K}\|f_n(x)-f(x)\|\xrightarrow[n\to\infty]{}0$;
        \item For all sequences $(x_n)_{n\geq 1}$ in $K$, $f_n(x_n)-f(x_n)\xrightarrow[n\to\infty]{}0$.
    \end{enumerate}
\end{lemma}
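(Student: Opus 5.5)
The plan is to prove both implications directly, using nothing beyond the definition of the supremum.

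For (1) $\Rightarrow$ (2), fix a sequence $(x_n)_{n\geq 1}$ in $K$. For every $n$, the inequality $\|f_n(x_n)-f(x_n)\|\leq\sup_{x\in K}\|f_n(x)-f(x)\|$ holds trivially. The right-hand side tends to $0$ by assumption, so $f_n(x_n)-f(x_n)\to 0$.

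For (2) $\Rightarrow$ (1), I would argue by contraposition. Suppose (1) fails. Then there exist $\alpha>0$ and an infinite set $N\subseteq\N$ such that $\sup_{x\in K}\|f_n(x)-f(x)\|>\alpha$ for all $n\in N$; the supremum may be $+\infty$, which is harmless. By the definition of the supremum, for each $n\in N$ we can pick $x_n\in K$ with $\|f_n(x_n)-f(x_n)\|>\alpha$. For $n\notin N$ we let $x_n$ be an arbitrary point of $K$; we may assume $K\neq\emptyset$, since otherwise (1) holds trivially. The resulting sequence $(x_n)_{n\geq 1}$ lies in $K$, but $\|f_n(x_n)-f(x_n)\|>\alpha$ for infinitely many $n$. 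Hence $f_n(x_n)-f(x_n)\not\to 0$, which contradicts (2).

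There is no genuine obstacle here. The only points needing care are that the chosen points are defined for every index, including those outside $N$, and that a possibly infinite supremum causes no difficulty. Since $K$ and the functions are arbitrary, this uses countable choice but no measurability or compactness.
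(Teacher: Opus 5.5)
Your proof is correct and follows essentially the same route as the paper's: (1) $\Rightarrow$ (2) is immediate from the supremum bound, and (2) $\Rightarrow$ (1) is proved by contradiction, picking near-maximizing points along the infinitely many indices where the supremum stays above a threshold and filling the remaining indices arbitrarily. You also handle a couple of details the paper leaves implicit, namely the empty-$K$ case and a possibly infinite supremum.
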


\begin{proof}
    It is obvious that the first statement implies the second one, so let us only prove the converse. 
    Assume the second statement is true and suppose, for the sake of contradiction, that $\sup_{x\in K}\|f_n(x)-f(x)\|$ does not go to $0$ as $n\to\infty$. Then, there must exist $\varepsilon>0$ and an increasing map $\phi:\N^*\to\N^*$ such that $\sup_{x\in K}\|f_{\phi(n)}(x)-f(x)\|\geq \varepsilon$, for all $n\geq 1$. In particular, for each $n\geq 1$, there must exist $y_n\in K$ satisfying $\|f_{\phi(n)}(y_n)-f(y_n)\| > \varepsilon/2$. Now, consider any sequence $(x_n)_{n\geq 1}$ in $K$ such that $x_{\phi(n)}=y_n$ for all $n\geq 1$. The sequence $\|f_n(x_n)-f(x_n)\|$ does not go to zero as $n\to\infty$ as it remains larger than $\varepsilon/2$ along a subsequence. This yields the contradiction we sought for. 
\end{proof}


\begin{lemma}
\label{lemma:caract_precompact}
    Let $K$ be a subset of $\mathcal{C}^0((0,\infty),\mathbb{R}^d)$. Then $K$ is precompact with respect to the topology induced by uniform convergence on compact intervals if and only if for every $\epsilon,T$ with $0<\epsilon<T$, $K^{\epsilon,T} := \{x_{|[\epsilon,T]}: x \in K\}$ is precompact for the infinite norm on $\mathcal{C}^0([\epsilon,T],\mathbb{R}^d)$.
\end{lemma}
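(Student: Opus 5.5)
The statement to prove is Lemma~\ref{lemma:caract_precompact}: a set $K\subseteq \mathcal C^0((0,\infty),\R^d)$ is precompact for local uniform convergence if and only if every restriction $K^{\epsilon,T}$ is precompact in $(\mathcal C^0([\epsilon,T],\R^d),\|\cdot\|_\infty)$.

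First make the topology concrete. Fix an exhausting sequence of compact intervals $I_m=[1/m,m]$, $m\geq 2$, and set
$$d(x,y)=\sum_{m\geq 2}2^{-m}\min\Big(1,\sup_{t\in I_m}\|x(t)-y(t)\|\Big).$$
This metric induces the topology of uniform convergence on compact intervals of $(0,\infty)$. The reason is that every compact interval $[\epsilon,T]\subset(0,\infty)$ lies inside some $I_m$, so $d(x_k,x)\to0$ if and only if $x_k\to x$ uniformly on every compact interval. Since the space is metrizable, precompactness is equivalent to every sequence in $K$ having a subsequence that converges in $\mathcal C^0((0,\infty),\R^d)$. The target space is complete, so this is also equivalent to every sequence having a $d$-Cauchy subsequence.

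\emph{Necessity.} The restriction map $\rho_{\epsilon,T}:x\mapsto x_{|[\epsilon,T]}$ is continuous from $(\mathcal C^0((0,\infty),\R^d),d)$ to $(\mathcal C^0([\epsilon,T],\R^d),\|\cdot\|_\infty)$; indeed it is uniformly continuous, because $[\epsilon,T]\subseteq I_m$ for some $m$. Continuous maps send compact sets to compact sets, so $\rho_{\epsilon,T}(\overline K)$ is compact. It contains $K^{\epsilon,T}$, hence $K^{\epsilon,T}$ is precompact.

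\emph{Sufficiency.} Take any sequence $(x_k)_k$ in $K$.
\begin{enumerate}
\item Since $K^{1/2,2}$ is precompact in a complete space, extract a subsequence that converges uniformly on $I_2$.
\item From it, extract a further subsequence converging uniformly on $I_3$, using precompactness of $K^{1/3,3}$, and continue in this way for every $m$.
\item Take the diagonal subsequence $(x_{\phi_k(k)})_k$. It converges uniformly on every $I_m$.
\item Define $x(t)$ as the limit at each $t>0$. On each $I_m$ it is a uniform limit of continuous functions, so $x$ is continuous on $(0,\infty)$ and $x\in\mathcal C^0((0,\infty),\R^d)$.
\item Every compact interval of $(0,\infty)$ lies in some $I_m$, so the diagonal subsequence converges to $x$ uniformly on compact intervals, equivalently in $d$.
\end{enumerate}
Hence $\overline K$ is sequentially compact, and therefore compact because the space is metrizable.

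The main obstacle is the topological bookkeeping rather than any estimate: one must check that $d$ really metrizes local uniform convergence, so that sequential compactness and compactness coincide, and that the diagonal extraction is carried out carefully. If needed, an alternative route is an Arzelà--Ascoli characterization on each $[\epsilon,T]$, but the diagonal argument above is more direct and needs nothing beyond completeness of $(\mathcal C^0([\epsilon,T],\R^d),\|\cdot\|_\infty)$.
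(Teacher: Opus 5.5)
Your proof is correct, and your sufficiency argument is the paper's: a diagonal extraction over an exhausting sequence of compact intervals (the paper uses $[2^{-k},2^k]$, you use $[1/m,m]$). You also prove the necessity direction, via continuity of the restriction map, and you spell out the metrizability and sequential-compactness details; the paper omits the first and leaves the second implicit.
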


\begin{proof}
    Let $K$ be a subset of $\mathcal{C}^0((0,\infty),\mathbb{R}^d)$ and suppose that for every $\epsilon,T$ such that $0<\epsilon<T$, $K^{\epsilon,T}$ is precompact for the infinite norm on $\mathcal{C}^0([\epsilon,T],\mathbb{R}^d)$. Let $(x^n)_{n\ge 1}$ be a sequence in $K$. Then, there exists a subsequence $(x^{\sigma_1(n)})_{n\ge 1}$ that converges uniformly on $[1/2,2]$. We then recursively find subsequences $(x^{\sigma_1\circ\sigma_2\circ\ldots\circ\sigma_k(n)})_{n\ge 1}$ that converges uniformly on $[2^{-k},2^k]$ for every $k\ge 1$. Define $\eta(n) = \sigma_1\circ\sigma_2\circ\ldots\circ\sigma_n(n)$ for $n\ge 1$. By construction, the diagonal subsequence $(x^{\eta(n)})_{n\ge 1}$ converges uniformly on every compact intervals of $(0,\infty)$. Hence, $K$ is precompact in $\mathcal{C}^0((0,\infty),\R^d)$.
\end{proof}

\begin{lemma} \label{lemma:weak-conv-eps}
    For every interval $I \subseteq \mathbb{R}_+$ we endow $\mathcal{C}^0(I,\mathbb{R}^d)$ with the topology induced by uniform convergence on compact intervals of $I$. Let $(Z^n)_{n\geq 1}$ be a sequence of stochastic processes in $\mathcal C^0((0,\infty),\R^d)$ and denote by $(Z^n_{|I})_{n\ge 1}$ the sequence of their restrictions to $I$. Assume that for every $\epsilon>0$, the sequence $(Z^n_{|[\epsilon,\infty)})_{n\ge 1}$ is tight in $\mathcal C^0([\varepsilon,\infty),\R^d)$. Then, $(Z^n)_{n\geq 1}$ is tight in $\mathcal C^0((0,\infty),\R^d)$.
\end{lemma}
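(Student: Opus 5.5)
The plan is to reduce tightness of $(Z^n)$ to tightness on each interval $[\epsilon,\infty)$, using the compactness characterization of Lemma~\ref{lemma:caract_precompact} and a diagonal choice of compact sets. Fix $\alpha>0$. For each integer $k\geq 1$, set $\epsilon_k=2^{-k}$. By assumption, $(Z^n_{|[\epsilon_k,\infty)})_{n\ge 1}$ is tight in $\mathcal C^0([\epsilon_k,\infty),\R^d)$. Hence there is a compact set $K_k\subseteq \mathcal C^0([\epsilon_k,\infty),\R^d)$ with
\[
P\big(Z^n_{|[\epsilon_k,\infty)}\notin K_k\big)\leq \alpha 2^{-k}
\]
for all $n\geq 1$. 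I would then define
\[
K=\{x\in \mathcal C^0((0,\infty),\R^d):\ x_{|[\epsilon_k,\infty)}\in K_k \text{ for all } k\geq 1\}.
\]
A union bound gives $P(Z^n\notin K)\leq \sum_k \alpha 2^{-k}=\alpha$, uniformly in $n$.

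It remains to show that $K$ is relatively compact in $\mathcal C^0((0,\infty),\R^d)$, and for this I will use Lemma~\ref{lemma:caract_precompact}. Take $0<\epsilon<T$ and choose $k$ with $\epsilon_k\leq\epsilon$. The restriction map $\mathcal C^0([\epsilon_k,\infty),\R^d)\to \mathcal C^0([\epsilon,T],\R^d)$ is continuous for the topologies involved, since uniform convergence on compacts of $[\epsilon_k,\infty)$ implies uniform convergence on $[\epsilon,T]$. Therefore the image of $K_k$ is compact, and $K^{\epsilon,T}$ is contained in it, hence precompact. The lemma then gives that $K$ is precompact, so its closure $\bar K$ is compact. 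Since $\{Z^n\notin \bar K\}\subseteq\{Z^n\notin K\}$, we get $P(Z^n\notin\bar K)\leq\alpha$ for all $n$, which is tightness.

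I expect the main obstacle to be measurability and closure. Tightness requires a compact set, and $\bar K$ could in principle be larger than $K$, but this only helps the probability bound. The space $\mathcal C^0((0,\infty),\R^d)$ is metrizable (Polish) via $d(x,y)=\sum_j 2^{-j}\min(1,\sup_{[1/j,j]}\|x-y\|)$. So precompactness in the sense of Lemma~\ref{lemma:caract_precompact}, which is sequential, coincides with relative compactness. In fact $K$ is itself closed, because each $K_k$ is closed and restriction is continuous. The events $\{Z^n\notin K\}$ are therefore measurable, and no outer probability is needed.
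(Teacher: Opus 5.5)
Your proposal is correct and follows essentially the same argument as the paper. Both choose compact sets on $[\epsilon_k,\infty)$ with summable exceptional probabilities, intersect their preimages under restriction, apply a union bound, and verify precompactness through Lemma~\ref{lemma:caract_precompact}. Two small refinements over the paper's sequential argument: you use continuity of the restriction map (compact image) in place of the subsequence extraction, and you note that $K$ is already closed, which settles measurability.
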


\begin{proof}
    Consider a sequence of random variables $(Z^n)_{n\ge1}$ taking values in $\mathcal{C}^0((0,\infty),\mathbb{R}^d)$ such that for every $\epsilon>0$, the sequence $(Z^n_{|[\epsilon,\infty)})_{n\ge1}$ is tight. Fix $\nu>0$ and for every $j\ge1$, let $K_j$ be a compact subset of $\mathcal{C}^0([1/j,\infty),\mathbb{R}^d)$ such that 
    $$
    \sup_n P\big(Z^n_{|[1/j,\infty)} \in K_j^\complement\big) < \frac{\nu}{2^j}
    $$
    and define $\tilde K_j := \{z\in\mathcal C^0((0,\infty),\R^d): z_{|[1/j,\infty)} \in K_j\}$. We now set $K = \cap_{j\ge 1} \tilde K_j$. Then for any $n \ge 1$,
    \begin{equation}
    \label{eq:bound_tight_with_precompact}
        P\big(Z^n \in K^\complement\big) = P\big(Z^n \in \cup_{j\ge1}\tilde K_j^\complement\big) \le \sum_{j\ge 1}P\big(Z^n_{|[1/j,\infty)} \in K_j^\complement\big) < \nu.
    \end{equation}
    It remains to check that $K$ is precompact in $\mathcal{C}^0((0,\infty), \mathbb{R}^d)$. We will use Lemma \ref{lemma:caract_precompact}. Let $\epsilon,T$ such that $0<\epsilon<T$ and consider $K^{\epsilon,T} := \{x_{|[\epsilon,T]} \ \text{such that} \ x \in K\}$. Take a sequence $(x^n)_{n\ge1}$ in $K^{\epsilon,T}$ and $j$ an integer such that $1/j<\epsilon$. Then, for every $n\ge 1$, $x^n_{|[1/j,\infty)} \in K_j$. Hence, it has a subsequence that converges uniformly on every compact intervals, and in particular on $[\epsilon,T]$. So $K^{\epsilon,T}$ is precompact in $\mathcal{C}^0([\epsilon,T],\mathbb{R}^d)$, and by Lemma \ref{lemma:caract_precompact}, K is precompact in $\mathcal{C}^0((0,\infty), \mathbb{R}^d)$.
    
    Finally, the closure $\overline{K}$ of $K$ is compact in $\mathcal{C}^0((0,\infty), \mathbb{R}^d)$, and with equation \eqref{eq:bound_tight_with_precompact}, we obtain that
    $$
    P(Z^n \in \overline{K}) \ge 1-\nu.
    $$
\end{proof}

\section{Proofs}

\subsection{Proof of Theorem~\ref{thm:tightness}}

We need to check that for all $\varepsilon>0$, there exists $C>0$ with $P(n\|\theta_n-\theta^*\|_2^2\geq C)\leq \varepsilon$ for all large enough integers $n$. 
First, fix some $r>0$ and $\alpha>\delta^{-1}$ such that $\nabla_\theta^2\Phi\geq \alpha I_d$ for all $\theta\in B(\theta^*,r)$. Such $r$ and $\alpha$ exist thanks to Assumption~\ref{assump:Phi-b} and by definition of $\delta$. 
For all integers $k,l$ with $k\leq l$, let $A_{k:l}$ be the event where $\theta_j\in B(\theta^*,r)$ for all $j=k,\ldots,l$. Fix some integers $N\geq 1$ and $n\geq N+1$. Let $k\in\{N+1,\ldots,n\}$. Using \eqref{eq:prop1_1} and the fact that $A_{N:k}\subseteq A_{N:k-1}$ and noting that the event $A_{N:k-1}$ is $\mathcal F_{k-1}$-measurable, we have
\begin{align*}
    \E[\|\theta_k-\theta^*\|_2^2\mathds 1_{A_{N:k}}] & \leq  \E[\|\theta_{k}-\theta^*\|_2^2\mathds 1_{A_{N:k-1}}] \\
    & = \E[\E[\|\theta_{k}-\theta^*\|_2^2|\mathcal F_{k-1}]\mathds 1_{A_{N:k-1}}] \\
    & \leq \E[\|\theta_{k-1}-\theta^*\|_2^2\mathds 1_{A_{N:k-1}}]-2t_{k}\E[(\Phi(\theta_{k-1})-\Phi(\theta^*))\mathds 1_{A_{N:k-1}}]+t_{k}^2\sigma^2 \\
    & \leq \E[\|\theta_{k-1}-\theta^*\|_2^2\mathds 1_{A_{N:k-1}}]-\alpha t_{k}\E[\|\theta_{k-1}-\theta^*\|_2^2\mathds 1_{A_{N:k-1}}]+t_{k}^2\sigma^2 \\
    & = (1-\alpha t_{k})\E[\|\theta_{k-1}-\theta^*\|_2^2\mathds 1_{A_{N:k-1}}]+t_{k}^2\sigma^2 \\
    & = \left(1-\frac{\alpha\delta}{k}\right)\E[\|\theta_{k-1}-\theta^*\|_2^2\mathds 1_{A_{N:k-1}}]+\frac{\delta^2\sigma^2}{k^2}.
\end{align*}
In the last inequality above, we used that for all $\theta\in B(\theta^*,r)$, $\Phi(\theta)\geq \Phi(\theta^*)+(\alpha/2)\|\theta-\theta^*\|_2^2$. Hence, by setting $V_k=\E[\|\theta_k-\theta^*\|_2^2\mathds 1_{A_{N:k}}]$, we have obtained that
\begin{equation*}
    V_k\leq \frac{k-1-\gamma}{k}V_{k-1}+\frac{\delta^2\sigma^2}{k^2}
\end{equation*}
where we set $\gamma=\alpha\delta-1>0$.
Using the inequality $1-\gamma u\leq (1-u)^{\gamma/2}$ for all $u\in [0,1/2]$, we obtain (applying the inequality to $u=1/(k-1)$), for $k\geq N+1\geq 2$, 
\begin{align*}
    V_{k} & \leq \frac{1-\gamma/(k-1)}{1+1/(k-1)}V_{k-1}+\frac{\delta^2\sigma^2}{k^2} \\
    & \leq \frac{(1-1/(k-1))^{\gamma/2}}{1+1/(k-1)}V_{k-1}+\frac{\delta^2\sigma^2}{k^2} \\
    & = \left(\frac{k-2}{k-1}\right)^{\gamma/2} \frac{k-1}{k}V_{k-1}+\frac{\delta^2\sigma^2}{k^2}
\end{align*}
and, multiplying both sides by $(k-1)^{\gamma/2}k$,
\begin{align*}
    (k-1)^{\gamma/2}kV_{k} & \leq (k-2)^{\gamma/2}(k-1)V_{k-1}+\frac{(k-1)^{\gamma/2}}{k}\delta^2\sigma^2 \\
    & \leq (k-2)^{\gamma/2}(k-1)V_{k-1}+\frac{1}{(k-1)^{1-\gamma/2}}\delta^2\sigma^2.
\end{align*}
Summing these inequalities for $k=N+1,\ldots,n$, we obtain:
\begin{equation*}
    (n-1)^{\gamma/2}nV_n\leq (N-1)^{\gamma/2}NV_N+ Kn^{\gamma/2}\delta^2\sigma^2
\end{equation*}
for some positive constant $K$ that only depends on $\gamma$. Therefore, for all $n\geq N+1$, 
\begin{equation*}
    nV_n\leq N^{\gamma/2}V_N+\left(\frac{n}{n-1}\right)^{\gamma/2}K\delta^2\sigma^2 \leq N^{\gamma/2}V_N+K'\delta^2\sigma^2
\end{equation*}
with $K'=2^{\gamma/2}K$.
Now, fix $C>0$, to be chosen later, and denote by $A_N=\bigcap_{n\geq N}A_{N:n}$. For $n\geq N+1$, 
\begin{align*}
    P(n\|\theta_n-\theta^*\|_2^2\geq C)  & \leq P(n\|\theta_n-\theta^*\|_2^2\geq C, A_{N:n}) + P(A_{N:n}^\complement) \\
    & \leq P(n\|\theta_n-\theta^*\|_2^2\mathds 1_{A_{N:n}}\geq C)+ P(A_N^\complement) \\ 
    & \leq \frac{nV_n}{C}+ P(A_N^\complement) \\
    & \leq C^{-1}\left(N^{\gamma/2}V_N+K'\delta^2\sigma^2\right) + P(A_N^\complement) 
\end{align*}
where we used Markov's inequality in the third inequality. By Theorem~\ref{thm:consistency}, $P(A_N^\complement)\xrightarrow[N\to\infty]{}0$, hence, one can fix some $N$ guaranteeing that $P(A_N^\complement)\leq\varepsilon/2$. Moreover, one can choose $C$ large enough so as to ensure that the first term in the right hand side of the last display is at most $\varepsilon/2$, which completes the proof.

\subsection{Proof of Proposition~\ref{prop: sol EDS}}
    
Fix $\varepsilon >0$. By \cite[Theorem 7.3.3]{book:Oksendal-2003}, if $Y$ is a diffusion process on $[\varepsilon,\infty)$ with generator $(G_t)_{t\geq \varepsilon}$ then $Y$ must be a solution of the SDE
\begin{equation}
\label{eq:EDS}
    \diff Y_t = -t^{-1}HY_t\diff t+\Sigma\diff B_t, \quad \forall t\geq \varepsilon.
\end{equation}
For $\varepsilon>0$, there exists $K_\epsilon$ such that for any $t \in [\epsilon,\infty)$, the function $y \mapsto -t^{-1}Hy$ is $K_\epsilon$-Lipschitz on $\mathbb{R}^d$. Hence, \cite[Theorem 7.1]{book:LeGall} ensures that for any random variable $y_{\varepsilon}$ in $ \mathbb{R}^d$, equation \eqref{eq:EDS} has a unique solution (up to indistinguishability) on $[\varepsilon, \infty)$ started at $y_\varepsilon$. We now determine this solution.
    
Let $0<\mu_1\leq \ldots \leq \mu_d$ be the eigenvalues of $H$, which is assumed to be positive definite. Let $e_1,\ldots,e_n$ be corresponding eigenvectors. Let $Y=\sum_{i=1}^n Y_ie_i$ be a solution of \eqref{eq:EDS} starting from $Y_\varepsilon=\sum_{i=1}^d Y_\varepsilon^i e_i$ at $t=\varepsilon$. 
By Itô's lemma, we have, for all $i=1,\ldots,d$ and $t\geq \varepsilon$, 
\begin{align*}
    \diff(Y_t^it^{\mu_i}) & = \mu_i Y_t^it^{\mu_i-1}\diff t+t^{\mu_i}\diff Y_t^i \\
    & = t^{\mu_i}e_i^\top \Sigma\diff B_t.
\end{align*}
Hence, we have obtained that necessarily,  
$$Y_t^it^{\mu_i}=Y_\varepsilon^i\varepsilon^{\mu_i}+ e_i^\top \int_{\varepsilon}^t s^{\mu_i}\Sigma\diff B_s, \quad \forall t\geq \varepsilon,$$
and therefore, 
\begin{equation} \label{eq:y_epsilon}
    Y_t^i=Y_\varepsilon^i\varepsilon^{\mu_i}t^{-\mu_i}+ t^{-\mu_i}e_i^\top \int_{\varepsilon}^t s^{\mu_i}\Sigma\diff B_s, \quad \forall t\geq \varepsilon.
\end{equation}
We can check that the process $Y=\sum_{i=1}^d Y^ie_i$ with $Y^i$'s defined in \eqref{eq:y_epsilon} is indeed a solution to \eqref{eq:EDS} on $[\varepsilon,\infty)$. Therefore, once the Brownian motion is fixed, this is the unique solution starting from $Y_\varepsilon$ at $t=\varepsilon$.

Now, let $Y$ be a solution to \eqref{eq:EDS} on the whole interval $(0,\infty)$, satisfying $Y_t\xrightarrow[t\downarrow 0]{} 0$ in probability. Fix some $\varepsilon>0$. Then, the restriction of $Y$ to $[\varepsilon,\infty)$ is a solution to \eqref{eq:EDS} starting from $Y_\varepsilon$ at $t=\varepsilon$, so thanks to the previous argument, its coordinates in the basis $(e_1,\ldots,e_d)$ must satisfy \eqref{eq:y_epsilon}. Letting $\varepsilon\to 0$ (recalling that $\mu_i>0$) then yields that
\begin{equation}
    Y_t=\sum_{i=1}^d t^{-\mu_i}\left(e_i^\top \int_0^t s^{\mu_i}\Sigma^{1/2}\diff B_s\right)e_i
\end{equation}
for all $t>0$.

\subsection{Proof of Lemma~\ref{lem:cv process}}

Let $p \ge 1$. Let us show that $(Y^{n,p})_{n\ge 1}$ is relatively compact, \emph{i.e}, from any subsequence of $(Y^{n,p})_{n\ge 1}$, we can extract a further subsequence weakly converging in $\mathcal{C}^0([1/p,\infty), \mathbb{R}^d)$. The desired result will then follow from Prokhorov's Theorem \cite[Theorem 5.2]{book:Billingsley1968}.
           
For simplicity (and to avoid renumbering the terms of the sequence), let us simply show that $(Y^{n,p})_{n\geq 1}$ has a weakly converging subsequence. First, thanks to Theorem~\ref{thm:tightness}, the sequence $Y_{1/p}^{n,p}$ is tight -- recall that $Y_{1/p}^{n,p}$ is a convex combination of $\frac{\lceil n/p\rceil}{\sqrt n}(\theta_{\lceil n/p\rceil}-\theta^*)$ and $\frac{\lceil n/p\rceil-1}{\sqrt n}(\theta_{\lceil n/p\rceil-1}-\theta^*)$ where $\lceil\cdot\rceil$ stands for the upper integer part. Therefore, there exists a increasing map $\phi:\N^*\to\N^*$ and a random vector $Z^p$ in $\R^d$ such that $Y_{1/p}^{\phi(n),p}\xrightarrow[n\to\infty]{} Z^p$ in distribution. By Skorohod's representation theorem \cite[Theorem 6.7]{book:Billingsley}, one can assume that the convergence holds with probability $1$. Then, by \cite[Theorem 11.2.3]{book:StroockVaradhan},\footnote{This theorem is stated for time-homogeneous Markov chains, but its proof is easily adapted to the non-homogeneous setup.} Proposition~\ref{prop:cv coeff} yields that $(Y^{\phi(n),p})_{n\ge 1}$ converges in distribution to a stochastic process in $\mathcal C^0([1/p,\infty),\R^d)$ with generator $(G_t)_{t\geq 1/p}$ given by \eqref{eq:generator} and starting from $Z^p$.

\subsection{Proof of Corollary~\ref{cor:CLT}}

Note that the map $g\in\mathcal C^0((0,\infty),\R^d)\mapsto g(1)$ is continuous with respect to the topology induced by uniform convergence on compact sets, it follows from Theorem~\ref{thm:main} that 
$$Y_1^n\xrightarrow[n\to\infty]{} Y_1$$
in distribution, that is, 
$$\sqrt n(\theta_n-\theta^*)\xrightarrow[n\to\infty]{}Y_1$$
in distribution. Now, by Proposition~\ref{prop: sol EDS}, $Y_1$ can be written as 
$$Y_1=\delta \int_0^1 e^{-\log(s)(\delta\nabla^2\Phi(\theta^*)-I_d)}\Gamma^{1/2}\diff B_s,$$
which has the $d$-variate, centered, normal distribution with covariance matrix given by 
\begin{align*}
    \Sigma & = \delta^2 \int_0^1 e^{\log(s)(\delta\nabla^2\Phi(\theta^*)-I_d)}\Gamma e^{\log(s)(\delta\nabla^2\Phi(\theta^*)-I_d)} \diff s \\
    & = \delta^2 \int_0^1 e^{-2\log(s)/\delta}e^{\delta\log(s)\nabla^2\Phi(\theta^*)}\Gamma e^{\delta\log(s)\nabla^2\Phi(\theta^*)} \diff s \\
    & = \delta \int_0^\infty e^{u/\delta}e^{-u\nabla^2\Phi(\theta^*)}\Gamma e^{-u\nabla^2\Phi(\theta^*)}\diff u
\end{align*}
where we used the change of variables $u=-\delta\log(s)$ in the last line.

\subsection{Proof of Proposition~\ref{thm:bound_var_asymp}}

Recall that $0<\lambda_1\leq \ldots\leq\lambda_d$ are the ordered eigenvalues of $\nabla^2\Phi(\theta^*)$ and that we denote by $e_1,\ldots,e_d$ a collection of associated unit eigenvectors, so $\nabla^2\Phi(\theta^*)=\sum_{i=1}^d \lambda_i e_ie_i^\top$. Then, for all $t\in\R$, $e^{-t\nabla^2\Phi(\theta^*)}=\sum_{i=1}^d e^{-\lambda_i t}e_ie_i^\top$, so one can write
\begin{align}
    \Sigma & = \delta\int_{0}^\infty e^{t/\delta}\left(\sum_{i=1}^de^{-\lambda_i t}e_ie_i^\top\right)\Gamma \left(\sum_{j=1}^de^{-\lambda_j t}e_je_j^\top\right)\diff t \nonumber \\
    & = \delta\sum_{1\leq i,j\leq d}\left(\int_0^\infty e^{t/\delta}e^{-t(\lambda_i+\lambda_j)}\diff t\right)e_ie_i^\top \Gamma e_je_j^\top \nonumber \\
    & = \sum_{1\leq i,j\leq d}\frac{\delta}{\lambda_i+\lambda_j-1/\delta}\Gamma_{i,j}e_ie_j^\top, \label{eqn:eigen_decompSigma}
\end{align}
where we denote by $\Gamma_{i,j}=e_i^\top\Gamma e_j$, for all $i,j=1,\ldots,d$. 
On the other hand, write
\begin{align}
    \Delta & = \left(\sum_{i=1}^d \lambda_i^{-1}e_ie_i^\top\right)\Gamma \left(\sum_{j=1}^d \lambda_j^{-1}e_je_j^\top\right) \nonumber \\
    & = \sum_{1\leq i,j\leq d}\frac{\Gamma_{i,j}}{\lambda_i\lambda_j}e_ie_j^\top. \label{eqn:eigen_decompDelta}
\end{align}
Therefore, using \eqref{eqn:eigen_decompSigma} and \eqref{eqn:eigen_decompDelta}, we obtain 
\begin{align}
    \Sigma-\Delta & = \sum_{1\leq i,j\leq d} \left(\frac{\delta}{\lambda_i+\lambda_j-1/\delta}-\frac{1}{\lambda_i\lambda_j}\right)\Gamma_{i,j}e_ie_j^\top \nonumber \\
    & = \sum_{1\leq i,j\leq d} \frac{(\delta\lambda_i-1)(\delta\lambda_j-1)}{\delta\lambda_i+\delta \lambda_j-1}\frac{\Gamma_{i,j}}{\lambda_i\lambda_j}e_ie_j^\top \nonumber \\
    & =: \sum_{1\leq i,j\leq d} A_{i,j}\frac{\Gamma_{i,j}}{\lambda_i\lambda_j}e_ie_j^\top \label{eqn:eigen_decomp3}
\end{align}
where the coefficients $A_{i,j}, 1\leq i,j\leq d$ are defined in an obvious way. 
Recalling \eqref{eqn:eigen_decompDelta}, and letting $A\in\R^{d\times d}$ be the symmetric matrix whose entries are given by the $A_{i,j}$'s, \eqref{eqn:eigen_decomp3} shows that $P^\top(\Sigma-\Delta)P$ is the Hadamard product of $A$ and $P^\top\Delta P$. 

Moreover, $A$ can be written as $A=DBD$ where $D\in\R^{d\times d}$ is the diagonal matrix with entries $\delta\lambda_i-1,\  i=1,\ldots,d$ and $B\in\R^{d\times d}$ is the Cauchy matrix with entries $B_{i,j}=1/(\delta\lambda_i+\delta\lambda_j-1), \ i,j=1,\ldots,d$. Since $\lambda_i>1/\delta$ for all $i=1,\ldots,d$ by definition of $\delta$, both matrices $B$ and $D$ are positive definite, and so is $A$. Therefore, the Hadamard product of $A$ and $P^\top\Delta P$ -- which is a positive, semi-definite matrix -- is positive semi-definite, yielding that $\Sigma-\Delta$ is positive semi-definite. 
Moreover, since $A$ is positive definite, \cite[Theorem 1.4.1]{bhatia2009positive} yields that 
\begin{equation*}
    \|P^\top (\Sigma-\Delta)P\|_{\textrm{op}} \leq \left(\max_{1\leq i\leq d}A_{i,i}\right) \|P^\top\Delta P\|_{\textrm{op}},
\end{equation*}
that is, $\|\Sigma-\Delta\|_{\textrm{op}}\leq \frac{(\delta\lambda_d-1)^2}{2\delta\lambda_d-1}\|\Delta\|_{\textrm{op}}$.

\subsection{Proof of Theorem \ref{thm:bound_sup_y}}
    
Fix $T>0$. For a centered gaussian process $z$ taking values in $\mathbb{R}$, we consider the pseudo-metric $d_z$ on $[0,T]$ induced by $z$ to be $d_z(s,t) = \sqrt{\E[|z_s-z_t|^2]}$. Dudley's bound \cite[Theorem 1.4.2]{book:Talagrand} tells us that the supremum of $z$ is related to the entropy number $\mathcal{N}\left([0,T], d_z, \epsilon\right)$ defined for $\epsilon>0$ to be the minimal number of open $d_z$-balls of radius $\epsilon$ required to cover $[0,T]$. This quantity is not always easy to compute, but if we have another process $z'$ with $d_z \le d_{z'}$ and $z'$ has an explicit entropy number, then $\mathcal{N}\big([0,T], d_z, \epsilon\big) \le \mathcal{N}\big([0,T], d_{z'}, \epsilon\big)$ and 
    \begin{equation}
    \label{eq:argument_bound_supy}
        \E\Big[\underset{t \in [0,T]}{\sup}|z_t|\Big] \le 48\int_0^\infty \sqrt{\log\big(\mathcal{N}([0,T], d_{z'}, \epsilon)\big)} \mathrm{d}\epsilon.
    \end{equation}
Recalling that for $t>0$, $Y_t = \sum_{i=1}^d t^{1-\delta \lambda_i} \Big(e_i^\top\displaystyle \int_0^t s^{\delta \lambda_i-1} \delta \Gamma^{\frac{1}{2}}\mathrm{d}B_s \Big)e_i$, it suffices to bound its coordinates $Y_t(i):= t^{1-\delta \lambda_i} \Big(e_i^\top\displaystyle \int_0^t s^{\delta \lambda_i-1} \delta \Gamma^{\frac{1}{2}}\mathrm{d}B_s \Big) $ for $i=1,\cdots,d$. Fix $i\in \{1,\cdots,d\}$ and let's compute the quantity:
    \begin{equation}
    \label{eq:dvlp_dy}
    \E\big[|Y_s(i)-y_t(i)|^2\big] = \E\Big[\big|s^{1-\delta\lambda_i}e_i^\top\int_0^su^{\delta\lambda_i-1}\delta \Gamma^{\frac{1}{2}}\mathrm{d}B_u - t^{1-\delta\lambda_i}e_i^\top\int_0^tu^{\delta\lambda_i-1}\delta \Gamma^{\frac{1}{2}}\mathrm{d}B_u\big|^2\Big].
    \end{equation}
    First, letting $(g_k^ i)_{k=1}^d = \Gamma^{\frac{1}{2}} e_i$ for $i=1,\cdots,d$, we have
    \begin{align*}
        \E\Big[\big|e_i^\top\int_0^su^{\delta\lambda_i-1}\delta \Gamma^{\frac{1}{2}}\mathrm{d}B_u\big|^2\Big] &= \delta^2 \int_0^s u^{2\delta\lambda_i-2}\mathrm{d}u \sum_kg_{ik}g_{ik} \\
        &= \delta^2\frac{s^{2\delta\lambda_i-1}}{2\delta\lambda_i-1}\langle e_i, \Gamma e_i\rangle.
    \end{align*}
Plugging this into \eqref{eq:dvlp_dy} yields
    \begin{align}
        \E\left[|Y_s(i)-Y_t(i)|^2\right] & = s^{2-2\delta\lambda_i}\E\Big[\big|e_i^\top\int_0^su^{\delta\lambda_i-1}\delta \Gamma^{\frac{1}{2}}\mathrm{d}B_u\big|^2\Big] \nonumber \\ 
        & \quad\quad - 2(st)^{1-\delta\lambda_i}\E\Big[\big|e_i^\top\int_0^{\min(s,t)}u^{\delta\lambda_i-1}\delta \Gamma^{\frac{1}{2}}\mathrm{d}B_u\big|^2\Big] \nonumber\\ 
        & \quad\quad + t^{2-2\delta\lambda_i}\E\Big[\big|e_i^\top\int_0^tu^{\delta\lambda_i-1}\delta \Gamma^{\frac{1}{2}}\mathrm{d}B_u\big|^2\Big] \nonumber\\
        & \quad\quad = \frac{\delta^2}{2\delta\lambda_i-1}\langle e_i, \Gamma e_i\rangle \Big(s-2\min(s,t)^{\delta\lambda_i}\max(s,t)^{1-\delta\lambda_i}+t\Big). \label{eq:dy_beforebound_ei}
    \end{align}
Define $g_i(s,t) \coloneqq s-2\min(s,t)^{\delta\lambda_i}\max(s,t)^{1-\delta\lambda_i}+t $ and fix $s \in (0,T)$. Then for $t<s$:
     \begin{align*}
         \partial_tg_i(s,t) &= 1 - 2\delta\lambda_i\left(\frac{s}{t}\right)^{1-\delta\lambda_i} \\
         \partial_t^2g_i(s,t) &= (1-\delta\lambda_i)s^{1-\delta\lambda_i}t^{\delta\lambda_i-2}2\delta\lambda_i.
     \end{align*}
Recalling that $\delta\lambda_i>1$, we obtain that the function $t\mapsto g_i(s,t)$ is concave on $[0,s]$. Moreover, $\partial_tg_i(s,s) = 1 - 2\delta\lambda_i$, hence by concavity we obtain that $g_i(s,t)\le (2\delta\lambda_i-1)(s-t)$. By performing the same computations for $t>s$, we obtain that for every $t \in (0,T)$, $g_i(s,t)\le (2\delta\lambda_i-1)|s-t|$. Hence, substituting this bound into 
\eqref{eq:dy_beforebound_ei} gives
     \begin{equation}
     \label{eq:bound_dy_dbv}
         \E\left[|Y_s(i)-Y_t(i)|^2\right] \le \delta^2|s-t|\langle e_i, \Gamma e_i\rangle.
     \end{equation}

    To conclude, we will need the following lemma:
    \begin{lemma}
        \label{lem:prop_d_B}
        Let $K>0$ and $d$ be the metric on $[0,T]$ given by $d(s,t)=\sqrt{K|s-t|}$. Then
        \[
        \int_0^\infty \sqrt{\log\big(\mathcal{N}([0,T], d, \epsilon)\big)}\mathrm{d}\epsilon \le c \sqrt{TK}
        \]
        for some $c>0$. Moreover, $d$ is the metric associated to the Brownian motion rescaled by $K$.
    \end{lemma}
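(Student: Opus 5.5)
The plan is to compute the covering numbers of $[0,T]$ for the metric $d$ explicitly, and then evaluate the entropy integral directly. Since $d(s,t)<\epsilon$ is equivalent to $|s-t|<\epsilon^2/K$, a $d$-ball of radius $\epsilon$ is exactly a Euclidean interval of half-length $\epsilon^2/K$. Covering $[0,T]$ by intervals of length $2\epsilon^2/K$ centered at equally spaced points therefore gives
$$\mathcal N([0,T],d,\epsilon)\le \left\lceil \frac{TK}{2\epsilon^2}\right\rceil\le 1+\frac{TK}{2\epsilon^2}.$$
Moreover, the $d$-diameter of $[0,T]$ is $\sqrt{KT}$, so $\mathcal N([0,T],d,\epsilon)=1$ as soon as $\epsilon>\sqrt{KT}$, because a single ball centered at $0$ covers $[0,T]$. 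Hence the integrand vanishes for $\epsilon>\sqrt{KT}$, and the integral reduces to one over $(0,\sqrt{KT}]$.

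On that range, $TK/\epsilon^2\ge 1$, so $1+TK/(2\epsilon^2)\le 2TK/\epsilon^2$. Next I substitute $\epsilon=\sqrt{TK}\,u$ with $u\in(0,1]$. The integral is then bounded by
$$\sqrt{TK}\int_0^1\sqrt{\log(2/u^2)}\,\diff u.$$
This last integral is a finite universal constant $c$: near $0$ the integrand grows only like $\sqrt{2\log(1/u)}$, which is integrable. For instance, the substitution $u=e^{-v}$ turns it into a Gamma-type integral. This gives the bound $c\sqrt{TK}$.

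For the second claim, let $W$ be a standard one-dimensional Brownian motion and set $z_t=\sqrt K\,W_t$. Then
$$\E[|z_s-z_t|^2]=K|s-t|,$$
so $d_z=d$. This is exactly the sense in which $d$ is the metric associated with the Brownian motion rescaled by $K$. In the application, one takes $K=\delta^2\langle e_i,\Gamma e_i\rangle$, which matches \eqref{eq:bound_dy_dbv}.

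I expect no real obstacle here. The only point needing care is handling the large-$\epsilon$ regime, where the logarithm of the covering number is $0$, so that the integral over $(0,\infty)$ is in fact finite. The explicit form of the $d$-balls as Euclidean intervals makes the covering bound immediate.
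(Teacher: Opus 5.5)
Your proof is correct and follows the same route as the paper: you identify the $d$-balls as Euclidean intervals, bound $\mathcal N([0,T],d,\epsilon)$ explicitly, and integrate after a change of variables. Your count, based on balls of Euclidean length $2\epsilon^2/K$, is actually the accurate one: the paper's $\lceil TK/(4\epsilon^2)\rceil$ undercounts by roughly a factor $2$, because a radius-$\epsilon$ ball has $d$-diameter $\sqrt2\,\epsilon$ rather than $2\epsilon$, though this only changes $c$; likewise, your intermediate $\lceil TK/(2\epsilon^2)\rceil$ should be $\lfloor TK/(2\epsilon^2)\rfloor+1$ for open balls, which is harmless since you only use the bound $1+TK/(2\epsilon^2)$.
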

    \begin{proof}[Proof of Lemma~\ref{lem:prop_d_B}]
        For $s=4\epsilon^2/K$, $d(0,s)=\sqrt{Ks} = 2\epsilon = d(t,t+s)$ for every $t\in[0,T-s]$. Hence $\left\lceil\frac{TK}{4\epsilon^2}\right\rceil$ open $d$-balls are required to cover $[0,T]$. We then compute
        \begin{align*}
            \int_0^\infty \sqrt{\log\left(\mathcal{N}([0,T], d, \epsilon)\right)}\mathrm{d}\epsilon = \int_0^\infty \sqrt{\log\left(\left\lceil\frac{TK}{4\epsilon^2}\right\rceil\right)} \mathrm{d}\epsilon \\
            = \frac{\sqrt{TK}}{2} \int_0^\infty \frac{1}{2}\sqrt{\log\left(\lceil u \rceil\right)} u^{-3/2}\mathrm{d}u.
        \end{align*}
        By letting $c = \frac{1}{4}\displaystyle\int_0^\infty \sqrt{\log\left(\lceil u \rceil\right)} u^{-3/2}\mathrm{d}u$, we obtain the Lemma.
    \end{proof}

    Dudley's bound (Theorem 1.4.2 in \cite{book:Talagrand}) together with equation \eqref{eq:bound_dy_dbv} and the result of Lemma \ref{lem:prop_d_B} with $K = \delta^2\langle e_i, \Gamma e_i\rangle$ gives
     \begin{align*}
         \E\Big[\underset{t\in[0,T]}{\sup}\Big|t^{1-\delta\lambda_i}e_i^\top\int_0^ts^{\delta\lambda_i-1}\delta \Gamma^{\frac{1}{2}}\mathrm{d}B_s\Big|\Big] 
         \le c\delta\sqrt{T \langle e_i, \Gamma e_i\rangle} 
     \end{align*}
     where we let $c = 24\displaystyle\int_0^\infty \sqrt{\log\left(\lceil u \rceil\right)} u^{-3/2}\mathrm{d}u$.
Now, since $(e_i)_{i=1}^d$ is a orthonormal basis, 
    \begin{equation}
    \label{eq:bound_Rd_to_R}
        \E\big[\underset{t\in[0,T]}{\sup}\|Y_t\|_2\big]  \\
        \le \Big(\sum_{i=1}^d\E\Big[\underset{t\in[0,T]}{\sup}|t^{1-\delta \lambda_i}e_i^\top\int_0^ts^{\delta\lambda_i-1}\delta \Gamma^{\frac{1}{2}}\mathrm{d}B_s\Big|^2\Big] \Big)^{1/2}.
    \end{equation}
    It therefore remains to bound the second moments of the supremum of the coordinates $Y(i)$ to obtain the result of Theorem~\ref{thm:bound_sup_y}. First, for $i=1,\cdots,d$
    \begin{equation*}
        \E\big[|Y_t(i)|^2\big] = \delta^2 \frac{t^{2-2\delta \lambda_i}}{2\delta\lambda_i-1}\int_0^t s^{2\delta\lambda_i-2} \mathrm{d}s \langle e_i, \Gamma e_i\rangle =  \frac{\delta^2 t}{2\delta\lambda_i-1}\langle e_i, \Gamma e_i\rangle.
    \end{equation*}
     Define $x_t \coloneqq \underset{0<s\le t}{\sup}|Y_s(i)|$ for $t\le T$, and $\sigma^2 \coloneqq \underset{0<t\le T}{\sup}\E\big[|Y_t(i)|^2\big] = \frac{\delta^2 T}{2\delta\lambda_i-1}\langle e_i, \Gamma e_i\rangle$. Since $Y(i)$ is a centered real Gaussian process it is possible to control the deviation of its supremum from its expected value using Borel-TIS bound (Theorem 2.1.1 in \cite{book:Adler2007}). For any $u>0$,
     $$
     P\Big(x_T>\E[x_T]+u\Big) \le e^{-u^2/(2\sigma^2)}.
     $$
     Hence, we can compute
     \begin{align*}
         \E[x_T^2] &= 2\int_0^\infty uP(x_T>u)\mathrm{d}u \\
         &= 2\int_0^\infty (u-\E[x_T])P(x_T-\E[x_T]>u-\E[x_T])\mathrm{d}u + 2\E[x_T]\int_0^\infty P(x_T>u) \mathrm{d}u \\
         &\le 2 \int_{0}^\infty u e^{-u^2/(2\sigma^2)}\mathrm{d}u + 2\E[x_T]^2 \\
         & \le 2\sigma^2 +2\E[x_T]^2 \le 2\delta^2  \frac{T}{2\delta\lambda_i-1}\langle e_i, \Gamma e_i\rangle + 2c^2\delta^2T\langle e_i,\Gamma e_i\rangle.
     \end{align*}
This result together with equation \eqref{eq:bound_Rd_to_R} gives
     $$\E\big[\underset{t\in[0,T]}{\sup}\|Y_t\|_2\big] \le C\delta \sqrt{\|\Gamma^{1/2}\|_FT} $$
    where $C = \big(2+2c^2\big)^{1/2}$, $\|.\|_F$ denotes the Frobenius norm and we use the fact that $2\delta\lambda_i-1>1$ for every $i=1,\cdots d$. 

    The following Lemma shows that this bound is of order of the expectation of the supremum of a Brownian motion rescaled by $\delta\Gamma^{1/2}$. This is probably a well-known result but we give a short proof below.

\begin{lemma}
        \label{lem:sup_Brownien}
    Let $B$ be a $d$-dimensional Brownian motion and $\Sigma$ a covariance matrix of size $d$. Then there exists a universal constant $c$ such that for all $T>0$, $\sqrt{\frac{2}{\pi}\|\Sigma^{1/2}\|_FT} \le \E\big[\underset{0\le t \le T}{\sup}\|\Sigma^{1/2} B_t\|\big] \le c \sqrt{\|\Sigma^{1/2}\|_FT}$.
\end{lemma}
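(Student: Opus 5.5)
The plan is to reduce both bounds to Gaussian facts about the single vector $Z=\Sigma^{1/2}B_T$ and about a rescaled Brownian supremum. First, a warning about the normalization. Write $x:=\|\Sigma^{1/2}\|_F=(\Tr\Sigma)^{1/2}$. Replacing $\Sigma$ by $a^2\Sigma$ multiplies $\E[\sup_{0\le t\le T}\|\Sigma^{1/2}B_t\|]$ by $a$, but multiplies $\sqrt{xT}$ only by $\sqrt a$. So the two displayed bounds cannot both hold with universal constants for all covariance matrices. The natural argument produces the bounds $\sqrt{2/\pi}\,\sqrt{xT}$ and $c\sqrt{xT}$ only after substituting $x^2$ for $x$, i.e. the homogeneous quantity is $x\sqrt T=\sqrt{\Tr(\Sigma)T}$. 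I will therefore prove
$$\sqrt{2/\pi}\;x\sqrt T\;\le\;\E\Big[\sup_{0\le t\le T}\|\Sigma^{1/2}B_t\|\Big]\;\le\;c\,x\sqrt T .$$
This is exactly the statement as printed when $x=1$; for $x\neq1$ the printed version must be read with $x^2$ in place of $x$ under the square root, and I will say so explicitly rather than claim the literal form. This is also the form consistent with how the paper uses the lemma in the preceding remark, namely that Theorem~\ref{thm:bound_sup_y} is sharp up to constants.

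For the lower bound, I use $\sup_{0\le t\le T}\|\Sigma^{1/2}B_t\|\ge\|Z\|$, where $Z=\Sigma^{1/2}B_T\sim\mathcal N_d(0,T\Sigma)$. It then suffices to show $\E\|Z\|\ge\sqrt{2/\pi}\,(\E\|Z\|^2)^{1/2}=\sqrt{2/\pi}\,x\sqrt T$, using $\E\|Z\|^2=T\Tr\Sigma$. Diagonalize $\Sigma=\sum_k s_k u_ku_k^\top$, so that $Z=\sqrt T\sum_k\sqrt{s_k}\,g_k u_k$ with $g_k$ i.i.d.\ standard normal. Let $\xi=(\xi_k)$ be independent of $g$, uniformly distributed on the unit sphere. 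Then $\|Z\|\ge|\langle Z,v\rangle|$ for the unit vector $v=\sum_k\sqrt{s_k}\,\xi_k u_k\big/\big(\sum_k s_k\xi_k^2\big)^{1/2}$. Conditionally on $\xi$, $\langle Z,v\rangle$ is centered normal with variance $T\big(\sum_k s_k^2\xi_k^2\big)/\big(\sum_k s_k\xi_k^2\big)$. This conditional-Gaussian step gives exactly the factor $\sqrt{2/\pi}$.

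The main obstacle is the subsequent averaging over $\xi$: it must recover $(\sum_k s_k)^{1/2}$ without losing a dimension-dependent factor. If this direct route fails, I would instead argue through the radial variable. I would show that $\E\|Z\|/(\E\|Z\|^2)^{1/2}$ is minimized, over spectra $(s_k)$ with fixed trace, at the one-dimensional spectrum, where it equals $\E|g|=\sqrt{2/\pi}$. Concretely, I would view $\|Z\|=\big(T\sum_k s_kg_k^2\big)^{1/2}$ as a concave function of $s$ on the simplex of fixed trace; a concave function attains its minimum at an extreme point, and the extreme points are exactly the one-dimensional spectra. This concavity argument is the one I actually expect to go through cleanly, so I would present it as the proof.

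For the upper bound, I use $\|\Sigma^{1/2}B_t\|\le\sum_k\sqrt{s_k}\,|\langle B_t,u_k\rangle|$, or, to get a Frobenius rather than nuclear dependence, Doob's $L^2$ maximal inequality applied to the submartingale $\|\Sigma^{1/2}B_t\|$. This gives $\E\sup_{t\le T}\|\Sigma^{1/2}B_t\|\le(\E\sup_{t\le T}\|\Sigma^{1/2}B_t\|^2)^{1/2}\le2(\E\|Z\|^2)^{1/2}=2x\sqrt T$, so one can take $c=2$. As an alternative, the route through Lemma~\ref{lem:prop_d_B} (Dudley) applied coordinatewise, followed by Borel--TIS, reproduces the constant $c$ used in Theorem~\ref{thm:bound_sup_y}.
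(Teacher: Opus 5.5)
\paragraph{Normalization.} Your homogeneity objection is correct. As printed, the lemma is false: for $d=1$, $\Sigma=a^2$, $T=1$ the middle term is linear in $a$, so the lower bound fails as $a\to0$ and the upper bound fails as $a\to\infty$. The corrected form $\sqrt{2/\pi}\,\|\Sigma^{1/2}\|_F\sqrt T\le\E[\sup_{t\le T}\|\Sigma^{1/2}B_t\|]\le c\,\|\Sigma^{1/2}\|_F\sqrt T$ that you prove is what the paper's argument actually yields. The slip in the paper is the normalization $p_i=\mu_i^2(\sum_j\mu_j^2)^{-1}=\mu_i^2\|\Sigma^{1/2}\|_F^{-1}$. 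Since $\sum_j\mu_j^2=\|\Sigma^{1/2}\|_F^2$, the prefactor is $\|\Sigma^{1/2}\|_F$, not its square root. The same slip appears in Theorem~\ref{thm:bound_sup_y}, whose proof really gives $C\delta\|\Gamma^{1/2}\|_F\sqrt T$ because $\sum_i\langle e_i,\Gamma e_i\rangle=\Tr\Gamma$. So the comparison in the accompanying remark survives the correction.

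\paragraph{Lower bound.} The argument you commit to is the paper's. You bound the supremum by its value at $t=T$, then minimize $s\mapsto\E[(T\sum_k s_kg_k^2)^{1/2}]$ over the simplex of fixed trace; the minimum is at a vertex, giving $\E|g|=\sqrt{2/\pi}$. Say $\E\|Z\|$ rather than $\|Z\|$ is the concave function. Your justification (concave pointwise in $s$, then averaged) is cleaner than the paper's Hessian formula, which is not defined on the boundary of the simplex. Leave the random-direction attempt out of the write-up: it genuinely fails, since for $\Sigma=I_d$ a single projection $|\langle Z,v\rangle|$ is of order $\sqrt T$ while $\|Z\|$ is of order $\sqrt{dT}$.

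\paragraph{Upper bound.} Here your route genuinely differs from the paper's.
\begin{itemize}
\item The paper applies Dudley's entropy bound (Lemma~\ref{lem:prop_d_B}) coordinatewise in an eigenbasis, then Borel--TIS to control the second moment of each coordinate supremum, as in the proof of Theorem~\ref{thm:bound_sup_y}.
\item You apply Doob's $L^2$ maximal inequality to the nonnegative continuous submartingale $\|\Sigma^{1/2}B_t\|$, followed by Cauchy--Schwarz, getting $\E\sup_{t\le T}\|\Sigma^{1/2}B_t\|\le 2\sqrt{T\Tr\Sigma}$.
\end{itemize}
Your argument is correct, takes one line, and gives the explicit constant $c=2$; any admissible $c$ must be at least $\sqrt{\pi/2}$, by the case $d=1$. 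The paper's constant is large and implicit. Its Gaussian-process machinery is what it needs for the non-martingale process $Y$, but for Brownian motion itself it buys nothing. Discard the nuclear-norm bound $\sum_k\sqrt{s_k}$ you mention first: it can exceed $\|\Sigma^{1/2}\|_F$ by a factor $\sqrt d$, so it gives no universal constant.
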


\begin{proof}
    By rescaling $B$, we can assume that $T=1$ without loss of generality.
    Let us first prove the lower bound. If $X = (X_i)_{i=1}^d$ is a $d$-dimensional standard gaussian vector, $\E\big[\underset{0\le t \le 1}{\sup}\|\Sigma^{1/2} B_t\|\big] \ge \E\big[\|\Sigma^{1/2} X\|\big]$.
    Now, letting $0\le\mu_1\le\dots\le \mu_d$ be the eigen values of $\Sigma^{1/2}$, we have 
    \begin{equation*}
        \E[\|\Sigma^{1/2} X\|] = \E\Big[\big(\sum_{i=1}^d\mu_i^2 X_i^2\big)^{1/2}\Big] = \sqrt{\|\Sigma^{1/2}\|_F}\E\big[\big(\sum_{i=1}^dp_iX_i^2\big)^{1/2}\big]
    \end{equation*}
    where $p_i = \mu_i^2\big(\sum_{j=1}^d \mu_j^2\big)^{-1} = \mu_i^2\|\Sigma^{1/2}\|_F^{-1}$. Hence $$\E[\|\Sigma^{1/2} X\|]\ge \sqrt{\|\Sigma^{1/2}\|_F}
    \inf_{\substack{p_1,\ldots,p_d\ge 0 \\ \sum_{i=1}^d p_i = 1}} \E\big[\big(\sum_{i=1}^dp_iX_i^2\big)^{1/2}\big].
    $$
    For all $p=(p_1,\ldots,p_d)\in(\R_+)^d$, let $M(p)=  \E\big[\big(\sum_{i=1}^dp_iX_i^2\big)^{1/2}\big]$. For all $u\in \mathbb{R}^d$ with $u\neq 0$,
    $$
    u^\top \nabla^2M(p) u = -1/4\E\left[\left(\sum_{i=1}^du_iX_i^2\right)^2\left(\sum_{i=1}^dp_iX_i^2\right)^{-3/2}\right] <0
    $$
    so $M$ is a strictly concave function. Hence, on the simplex $\{(p_1,\ldots,p_d):p_i\geq 0, i=1,\ldots,d, p_1+\ldots+p_d=1\}$, its minimum is attained at an extreme point, that is, at a point $p$ whose coordinates are all zero except for one equal to $1$. For such a $p$, $M(p) = \E[|X_1|] = \sqrt{2/\pi}$, and the result follows.

    The upper bound is obtained with the result of Lemma \ref{lem:prop_d_B} together with the Borel-TIS bound (Theorem 2.1.1 in \cite{book:Adler2007}), following the same reasoning as in the last part of the proof of the Theorem \ref{thm:bound_sup_y}.
\end{proof}

\end{document}